\newcommand{\ve}[1]{\mathbf{\bm{#1}}}  
\newcommand{\m}[1]{\mathbf{\bm{#1}}}  
\newcommand{\set}[1]{\mathcal{#1}}  
\newcommand{\R}{\mathbb R}
\renewcommand{\t}{\tau}
\newcommand{\x}{{\ve x}}
\newcommand{\s}{{\ve s}}
\newcommand{\y}{{\ve y}}
\newcommand{\z}{{\ve z}}
\newcommand{\h}{{\ve h}}
\newcommand{\f}{{\ve f}}
\newcommand{\g}{{\ve g}}
\newcommand{\xt}{{\ve x_t}}
\newcommand{\xtm}{{\ve x_{t-1}}}
\newcommand{\st}{{\ve s_t}}
\newcommand{\ut}{{\ve u_t}}
\newtheorem{Theorem}{Theorem}
\newtheorem{Lemma}{Lemma}
\DeclareMathOperator{\diag}{diag}
\DeclareMathOperator{\vspan}{span}
\begin{document}

%
\runningtitle{Independent Innovation Analysis}

%

\twocolumn[

\aistatstitle{Independent Innovation Analysis for \\Nonlinear Vector Autoregressive Process}

\aistatsauthor{Hiroshi Morioka \And Hermanni H\"{a}lv\"{a} \And Aapo~Hyv\"{a}rinen }

\aistatsaddress{ RIKEN AIP \And University of Helsinki \And University of Helsinki \\ Universit\'e Paris-Saclay, Inria} ]

\begin{abstract}
The nonlinear vector autoregressive (NVAR) model provides an appealing framework 
to analyze multivariate time series obtained from a nonlinear dynamical system.
However, the innovation (or error), which plays a key role by driving the dynamics, 
is almost always assumed to be additive. Additivity greatly limits the generality of the model, 
hindering analysis of general NVAR processes which have nonlinear interactions between the innovations.
Here, we propose a new general framework called independent innovation analysis (IIA),
which estimates the innovations from completely general NVAR.
We assume mutual independence of the innovations as well as their modulation 
by an auxiliary variable (which is often taken as the time index and simply interpreted as nonstationarity).
We show that IIA guarantees the identifiability of the innovations with arbitrary nonlinearities, 
up to a permutation and component-wise invertible nonlinearities. 
We also propose three estimation frameworks depending on the type of the auxiliary variable.
We thus provide the first rigorous identifiability result for general NVAR, as well as very general tools for learning such models.
\end{abstract}

\section{INTRODUCTION}

Multivariate time series are of considerable interest in a number of domains, 
such as finance, economics, and engineering. Vector autoregressive (VAR) models 
have played a central role in capturing the dynamics hidden in such time series \citep{Sims1980}.
VAR models typically attempt to fit a multivariate time series with linear coefficients representing the 
dependencies of multivariate variables within limited number of lags, 
and \textit{innovation} (or error) representing new information (impulses) fed to the process at a given time point.
Although it has been common practice to maintain a linear functional form to achieve interpretability and tractability, 
recent studies have provided a growing body of evidence that nonlinearity often exists in time series, 
and allowing for nonlinearities can be valuable for uncovering important features of dynamics
\citep{Jeliazkov2013,Kalli2018,Koop2010,Primiceri2005,Shen2019,Terasvirta1994,Tsay1998}.
Many recent studies used a deep learning framework to model nonlinear processes in video 
\citep{Finn2016,Lotter2017,Oh2015,Srivastava2015,Villegas2017,Wichers2018}
or audio \citep{Oord2016}, for example, with neural networks.

The innovation plays a key role by driving time series, and it can have a concrete meaning, such as economic shocks in finance, external torques given to a mechanical system, or stimulation in neuroscience experiments. However,  its estimation has a serious indeterminacy 
even with linear models, if only conventional statistical assumptions are made.
To facilitate estimation, VAR typically assumes that the innovations are additive, 
multivariate Gaussian (not necessary uncorrelated), and temporally independent (or serially uncorrelated).
A well-known consequence of this is that the innovations cannot be identified:
Multiplication of such innovations by any orthogonal matrices
will not  change distribution of the observed data, which hinders their interpretation.
Some studies proposed to incorporate independent component analysis (ICA) framework 
to guarantee identifiability, by assuming mutual independence of 
non-Gaussian innovations \citep{Gomez2008,Hyvarinen2010,Lanne2017,Moneta2013}.
However, those studies assumed linear VAR models, while indeterminacy would presumably be even more serious in general nonlinear VAR (NVAR) models, in which the innovations may not be additive anymore. In fact, a serious lack of identifiability in general nonlinear cases is well-known in nonlinear ICA (NICA) \citep{Hyvarinen1999}.

We propose a novel VAR analysis framework called independent innovation analysis (IIA), 
which enables estimation of innovations hidden in unknown general NVAR. We first propose a model which allows for nonlinear interactions between innovations and observations, with very general nonlinearities.
IIA can be seen as an extension of recently proposed NICA frameworks \citep{Halva2020,Hyvarinen2016,Hyvarinen2019}, 
and guarantees the identifiability of innovations up to permutation and component-wise nonlinearities. The model assumes a certain temporal structure in the innovations, which typically takes the form of nonstationarity, but can be more general.
We propose three practical estimation methods for IIA, 
two of which are self-supervised and can be easily implemented based on ordinary neural network training,
and the remaining one uses maximum-likelihood estimation in connection with a hidden Markov model.
Our identifiability theory for NVAR is quite different from anything presented earlier, and thus it can contribute as a new general framework for NVAR process.

\section{MODEL DEFINITION}

\subsection{NVAR Model and Demixing Model}

We here assume a general NVAR model, which is first order (NVAR(1)) for simplicity:
\begin{equation}
        \xt = \ve f(\xtm, \st),  \label{eq:f}
\end{equation}
where $\ve f: \R^{2n} \rightarrow \R^{n}$ represents an NVAR (mixing) model, 
and $\xt=[x_1(t), \ldots, x_n(t)]^T$ and $\st=[s_1(t), \ldots, s_n(t)]^T$ are  
observations and innovations (or errors) of the process at time point $t$, respectively.
As with ordinary VAR, the innovations are assumed to be temporally independent (serially uncorrelated).
Importantly, this model includes potential nonlinear interaction between the observations and innovations,
unlike ordinary linear VAR models \citep{Gomez2008,Hyvarinen2010,Lanne2017,Moneta2013} 
and additive innovation nonlinear models \citep{Shen2019}.
We assume that $\ve f$ is unknown and make minimal regularity assumptions on it. 
Our goal is to estimate the innovations (latent components)
$\s$ only from the observations $\x$ obtained from the unknown NVAR process.
The model, learning algorithms, Theorems, and proofs below can be easily extended to
higher order models NVAR($p$) ($p > 1$) by replacing $\xtm$ by $[\ve x_{t-1},\ldots,\ve x_{t-p}]$.

To estimate the innovation, we propose a new framework called IIA,
which learns the inverse (demixing) of the NVAR (mixing) model from the observations in data-driven manner,
based on some statistical assumptions on the innovations.
The theory is related to ICA \citep{Hyvarinen1999ica}, which estimates a demixing
 from {\it instantaneous} mixtures of latent components, i.e., $\xt = \f_\text{ICA}(\st)$, where $\f_\text{ICA}: \R^{n} \rightarrow \R^{n}$ is usually a linear function. However, IIA includes a recurrent structure of the observations in the model (Eq.~\ref{eq:f}), which makes IIA theoretically distinct from ordinary ICA. Nevertheless, in the following we leverage the recently developed theory of NICA \citep{Hyvarinen2016,Hyvarinen2019}.

We start by transforming the NVAR model to something similar to NICA.
This leads us to consider the following augmented NVAR (mixing) model
\begin{equation}
        \begin{bmatrix} \xt \\ \xtm \end{bmatrix} = \tilde{\ve f} \left( \begin{bmatrix}\st \\ \xtm \end{bmatrix} \right)
        = \begin{bmatrix}\ve f(\xtm, \st) \\ \xtm \end{bmatrix},  \label{eq:fa}
\end{equation}
where $\tilde{\ve f}: \R^{2n} \rightarrow \R^{2n}$ is the augmented model,
which includes the original NVAR model $\ve f$ in the half of the space,
and an identity mapping of $\xtm$ in the remaining subspace.
Importantly, this augmentation does not impose any particular constraint on the original model. We only assume that this augmented model is invertible
(i.e.~bijective; while $\ve f$ itself cannot be invertible) as well as sufficiently smooth, but we do not constrain it in any other way.
The estimation of the innovation $\ve s$ can then be achieved by learning the inverse (demixing) of the augmented NVAR model $\tilde{\ve f}$:
\begin{equation}
        \begin{bmatrix} \st \\ \xtm \end{bmatrix} = \tilde{\ve g} \left( \begin{bmatrix}\xt \\ \xtm \end{bmatrix} \right) 
        = \begin{bmatrix}\ve g(\xt, \xtm) \\ \xtm \end{bmatrix},  \label{eq:ga}
\end{equation}
where $\tilde{\ve g}: \R^{2n} \rightarrow \R^{2n}$ is the augmented demixing model of the (true) augmented NVAR model $\tilde{\ve f}$, and
 $\ve g(\xt, \xtm) \in \R^n$ is the sub-space of the demixing model 
 representing a mapping from two temporally consecutive observations to the innovation at the corresponding timing.
This is simply a deduction from Eq.~\ref{eq:fa},
and does not impose any additional assumptions on the original model.

\subsection{Innovation Model with Auxiliary Variable}

The estimation of the demixing model in an unsupervised (or self-supervised) manner needs some assumptions on the innovations.
Although some studies guaranteed the identifiability 
by assuming mutual independence of the innovations in linear VAR models \citep{Hyvarinen2010,Lanne2017,Moneta2013},
it would not be enough in nonlinear cases, 
as can be seen in well-known indeterminacy of NICA with i.i.d.~components \citep{Hyvarinen1999}.
Thus, we here adopt the framework recently proposed for NICA \citep{Halva2020,Hyvarinen2016,Hyvarinen2019};
we assume that the distribution of the innovation is time-dependent,
and modulated through an observable (or unobservable, Section~\ref{sec:hmm}) auxiliary information about the innovation, represented by a random variable $\ut$ for each data point $t$.
In practice, $\ut$ can simply be time-index $t$ to represent data-point-specific modulations or a time-segment-index to represent segment-wise modulations, thus incorporating information about nonstationarity.
%
More specifically, we assume the followings:
\begin{enumerate} \def\labelenumi{A\theenumi.}
%
\item Each $s_i$ is statistically dependent on some $m$-dimensional random auxiliary variable $\ve u$,  
but conditionally independent of the other $s_j$, and has a univariate exponential family distribution conditioned on $\ve u$ (we omit data index $t$ here):\label{A2}
\begin{equation}
        p(\ve s | \ve u) = \prod_{i=1}^n \frac{Q_i(s_i)}{Z_i(\ve u)} \exp \left [ \sum_{j=1}^k q_{ij} (s_i) \lambda_{ij}(\ve u) \right ],  \label{eq:logpsi}
\end{equation}
where $Q_i$ is the base measure, $Z_i$ is the normalizing constant, $k$ is the model order,
$q_{ij}$ is the sufficient statistics, and $\lambda_{ij}(\ve u)$ is a parameter (scalar function) depending on $\ve u$.\footnote{The $k$ is assumed to be minimal, meaning that we cannot rewrite the form with a smaller $k' < k$.
The parameters are assumed that for each $i$, $(\exists (\lambda_{i1}(\ve u), \ldots, \lambda_{ik}(\ve u)) | \forall s_i, \sum_{j=1}^k q_{ij} (s_i) \lambda_{ij}(\ve u) = \text{const}) \Longrightarrow (\lambda_{i1}(\ve u), \ldots, \lambda_{ik}(\ve u)) = 0$. These conditions are required for the distribution to be strongly exponential \citep{Khemakhem2020},
which is not very restrictive, and satisfied by all the usual exponential family distributions.}
%
\end{enumerate}

This model is related to the assumption of Gaussian innovations in ordinary VAR,
but requires more specific properties represented by conditional independence and sufficient probabilistic modulation,
determined by an auxiliary variable $\ve u$. 
Note that exponential families have universal approximation capabilities, so this assumption is not very restrictive \citep{Sriperumbudur2017}.

\section{LEARNING ALGORITHMS}

Depending on the type of the auxiliary variable $\ve u$ in the innovation model (see A\ref{A2}), we can develop three learning algorithms;
The first one (IIA-GCL; Section~\ref{sec:gcl}) is for
general cases of observable $\ve u$, the second one (IIA-TCL; Section~\ref{sec:tcl}) is for
specific type of observable $\ve u$ underlying within a finite number of classes, 
and the last one (IIA-HMM; Section~\ref{sec:hmm}) is for unobservable $\ve u$ represented by hidden Markov chain.

\subsection{General Contrastive Learning Framework (IIA-GCL)}
\label{sec:gcl}

In the general case with observable and possibly continuous-valued $\ve u$, we develop a general contrastive learning (GCL) framework for IIA,
based on the recently proposed NICA framework \citep{Hyvarinen2019}.
In IIA-GCL, we train a feature extractor and a logistic regression classifier, which discriminates a real dataset
composed of the true observations of $(\xt, \xtm, \ut)$, from a version where randomization is performed on $\ve u$.
Thus we define two datasets in which a data point $t$ is written as follows, respectively:
\begin{equation}
        \tilde{\ve x}_t = (\xt, \xtm, \ut) \text{ vs. }  \tilde{\ve x}_t^\ast = (\xt, \xtm, \ve u^\ast),  \label{eq:xtilde}
\end{equation}
where $\ve u^\ast$ is a random value from the distribution of $\ve u$, 
but independent of $\xt$ and $\xtm$, created in practice by random permutation of the empirical sample of $\ve u$.
We learn a nonlinear logistic regression system using a regression function of the form
\begin{align}
        r(\tilde{\ve x}_t) = &\sum_{i=1}^n \sum_{j=1}^k \psi_{ij}(h_{i}(\xt, \xtm)) \mu_{ij}(\ut) + \phi(\xtm, \ut) \nonumber \\
         &+ \alpha(\ut) + \beta(\h(\xt, \xtm)) + \gamma(\xtm), \label{eq:gcl_r}
\end{align}
which gives the posterior probability of the first class $\tilde{\ve x}$ as $1 / (1 + \exp (-r(\tilde{\ve x}_t))$. 
The scalar-valued functions $\psi_{ij}$, $h_i$, $\mu_{ij}$,  $\phi$, $\alpha$, $\beta$, and $\gamma$ 
take some specific combinations of $\xt$, $\xtm$, and $\ut$ as input,
which are designed to match to the difference of the log-pdfs of $(\xt, \xtm, \ut)$ in the two datasets, given the innovation model Eq.~\ref{eq:logpsi} 
(see Supplementary Material~\ref{sec:app_gcl}).
The universal approximation capacity \citep{Hornik1989} is assumed for those functions;
they would typically be learned by neural networks.
This learning framework and the regression function are based on the following Theorem,
proven in Supplementary Material~\ref{sec:app_gcl}:

\begin{Theorem}
Assume the following:\vspace*{-2mm}
\begin{enumerate} 
\item We obtain observations and auxiliary variable $\ve u$ from an NVAR model (Eq.~\ref{eq:f}), whose augmented model (Eq.~\ref{eq:fa}) is invertible and sufficiently smooth.\label{GA1}
\item The latent innovations of the process are temporally independent, follow the assumption A\ref{A2} with $k \ge 2$, and the sufficient statistics $q_{ij}$ are twice differentiable.\label{GA2}
\item (Assumption of Variability) There exist $nk+1$ distinct points $\ve u_0, \ldots, \ve u_{nk}$ such that the matrix\label{GA3}
\begin{align}
        \m L = (\ve\lambda(\ve u_1) - \ve\lambda(\ve u_0), \ldots, \ve\lambda(\ve u_{nk}) - \ve\lambda(\ve u_0))
\end{align}
of size $nk \times nk$ is invertible, where $\ve\lambda(\ve u) = (\lambda_{11}(\ve u), \ldots, \lambda_{nk}(\ve u))^T \in \R^{nk}$.
\item We train a nonlinear logistic regression system with universal approximation capability to discriminate between 
$\tilde{\ve x}$ and $\tilde{\ve x}^\ast$ in Eq.~\ref{eq:xtilde} with regression function in Eq.~\ref{eq:gcl_r}.\label{GA4}
\item The augmented function $\tilde{\h}(\xt, \xtm) = [\h(\xt, \xtm), \xtm]: \R^{2n} \rightarrow \R^{2n}$ is invertible.\label{GA5}
\item The scalar functions $\psi_{ij}$ in Eq.~\ref{eq:gcl_r} are twice differentiable, and for each $i$, the following implication holds:
$(\exists \ve\theta \in \R^k | \forall y, \sum_{j=1}^k \psi_{ij} (y) \theta_j = \text{const}) \Longrightarrow \ve\theta = 0$.\label{GA6}
\end{enumerate}\vspace*{-2mm}
Then, in the limit of infinite data, $\h$ in the regression function provides a consistent estimator of the IIA model:
The functions $h_i(\xt, \xtm)$ give the independent innovations, up to permutation and scalar (component-wise) invertible transformations.
\end{Theorem}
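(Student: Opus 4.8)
\emph{Proof strategy.} The plan is to adapt the generalized-contrastive-learning identifiability proof for nonlinear ICA \citep{Hyvarinen2019} to the augmented state $(\xt,\xtm)$, exploiting the block-triangular structure of the augmented maps. First I would invoke the standard fact that, with universal approximators and in the infinite-data limit, the global minimizer of the logistic loss for the discrimination task in Eq.~\ref{eq:xtilde} equals (a.e.) the logistic of the log-ratio of the two data densities. Here $\tilde{\ve x}$ has density $p(\xt,\xtm\mid\ut)\,p(\ut)$ and $\tilde{\ve x}^\ast$ has density $p(\xt,\xtm)\,p(\ut)$, so at the optimum
\[
  r(\xt,\xtm,\ut)=\log p(\xt,\xtm\mid\ut)-\log p(\xt,\xtm).
\]
Since $r$ is constrained to the form of Eq.~\ref{eq:gcl_r}, I also need that this form can realize the above log-ratio; that emerges from the next step.

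Next, using that the augmented NVAR model is an invertible smooth bijection, I change variables from $(\xt,\xtm)$ to $(\st,\xtm)$ via the true augmented demixing; its Jacobian is block-triangular with an identity block, so its log-determinant is $\log|\det(\partial\st/\partial\xt)|$. Temporal independence of the innovations makes $\st$ conditionally independent of $\xtm$ given $\ut$, hence $p(\xt,\xtm\mid\ut)=p(\st\mid\ut)\,p(\xtm\mid\ut)\,|\det(\partial\st/\partial\xt)|$, and plugging the exponential-family form of A\ref{A2} for $\log p(\st\mid\ut)$ writes the log-ratio as the sum of a bilinear term $\sum_{ij}q_{ij}(s_i)\lambda_{ij}(\ut)$, a term depending on $\ut$ only (the $\log Z_i$), a term depending on $(\xtm,\ut)$ (from $\log p(\xtm\mid\ut)$), and terms free of $\ut$ ($\sum_i\log Q_i(s_i)$, the log-Jacobian, $-\log p(\xt,\xtm)$). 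Matching against Eq.~\ref{eq:gcl_r} pairs $\psi_{ij}\!\circ\! h_i$ with $q_{ij}\!\circ\! g^\ast_i$, $\mu_{ij}$ with $\lambda_{ij}$, $\alpha$ with the $\log Z_i$, $\phi$ with $\log p(\xtm\mid\ut)$, and $\beta,\gamma$ with the $\ut$-free remainder, confirming both that the regression form is rich enough and that the optimum is essentially unique.

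Then I would evaluate this master identity at the $nk+1$ points of the variability assumption and subtract the equation at $\ve u_0$; this cancels the $\ut$-free pieces and leaves, for $l=1,\dots,nk$, an identity of the shape $\sum_{ij}\psi_{ij}(h_i)\,\Delta\mu^{(l)}_{ij}=\sum_{ij}q_{ij}(s_i)\,\Delta\lambda^{(l)}_{ij}+w^{(l)}(\xtm)$, with $w^{(l)}$ a function of $\xtm$ alone. Differentiating in $\xt$ kills $w^{(l)}$; stacking the $nk$ equations and inverting $\m L$ (nonsingular by the variability assumption) gives, for each coordinate $a$, $q'_{ij}(s_i)\,\partial s_i/\partial(\xt)_a=\sum_{i'j'}B_{(ij)(i'j')}\,\psi'_{i'j'}(h_{i'})\,\partial h_{i'}/\partial(\xt)_a$ for a constant matrix $\m B$. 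Because $\partial\st/\partial\xt$ (from the NVAR invertibility) and $\partial\h/\partial\xt$ (from invertibility of $\tilde{\h}$) are full rank, the vectors $a\mapsto\partial h_{i'}/\partial(\xt)_a$ form a basis of $\R^n$; re-expanding $a\mapsto\partial s_i/\partial(\xt)_a$ in it yields $q'_{ij}(s_i)\,P_{ii'}=\sum_{j'}B_{(ij)(i'j')}\,\psi'_{i'j'}(h_{i'})$ with $\m P=\partial\s/\partial\h$ invertible, whose right-hand side is a function of $h_{i'}$ only. When $P_{ii'}\equiv 0$, the non-degeneracy condition on the $\psi_{ij}$ forces the corresponding block of $\m B$ to vanish; when $P_{ii'}\not\equiv 0$ for two distinct $i'$, taking a ratio $q'_{ij_1}/q'_{ij_2}$ (available since $k\ge 2$) makes a single function equal functions of two independently varying arguments, hence constant, which contradicts linear independence of $1,q_{i1},\dots,q_{ik}$. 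So each row of $\m P$ has exactly one nonzero entry and, by invertibility, this defines a permutation $\sigma$ with $s_i=v_i(h_{\sigma(i)},\xtm)$; a final use of $k\ge 2$ together with the strong-exponential-family conditions removes the residual $\xtm$-dependence, leaving $s_i$ a strictly monotone, hence invertible, function of $h_{\sigma(i)}$ --- equivalently each $h_j$ is a component-wise invertible transformation of $s_{\sigma^{-1}(j)}$.

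The step I expect to be the main obstacle is this last one: converting the linear-in-sufficient-statistics relation into a genuine block permutation while simultaneously excluding the dependence on $\xtm$. That is exactly where $k\ge 2$, the strong-exponential-family conditions in A\ref{A2}, and the non-degeneracy condition on the $\psi_{ij}$ all have to be used together, and it requires connectedness and continuity arguments (level sets of ratios of the $q'_{ij}$, and ``almost everywhere'' versus ``everywhere'') rather than pure algebra. By contrast, the probabilistic bookkeeping in the middle step --- obtaining $\st\perp\xtm\mid\ut$ from temporal independence and checking that the log-Jacobian and $-\log p(\xt,\xtm)$ are genuinely $\ut$-free so that they drop out under the subtraction --- is routine but should be carried out with care.
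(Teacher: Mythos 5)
Your first two stages coincide with the paper's proof: the optimal regression function equals $\log p(\xt,\xtm\mid\ut)-\log p(\xt,\xtm)$, the change of variables through the augmented demixing with its block-triangular Jacobian, the term-by-term matching against Eq.~\ref{eq:gcl_r}, and the evaluation at the $nk+1$ points of the Variability assumption followed by subtraction of the $\ve u_0$ equation all appear verbatim (the paper's Eqs.~\ref{eq:gcl_eq}--\ref{eq:gcl_eq_vec_v}). Where you diverge is the final separation step. The paper differentiates the master identity \emph{twice}, in $y_c$ (a source coordinate) and $y_d$ (any other coordinate, including those of $\xtm$): the right-hand side then vanishes identically, and after inverting $\m M$ (whose invertibility the paper establishes first via Lemma~3 of Khemakhem et al.) one gets $\m\Psi(\ve y)^T\ve\upsilon(\ve y)=0$; assumption~\ref{GA6} gives $\m\Psi$ full row rank via Lemmas~4--5 of Khemakhem et al., hence $v_a^c v_a^d=0$ for \emph{all} pairs, which in one stroke yields both the one-nonzero-entry-per-row structure and the absence of any $\xtm$-dependence. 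You instead differentiate only once and only in $\xt$, re-expand in the basis $\partial h_{i'}/\partial(\xt)_a$, and argue by ratios of the $q'_{ij}$; this is a legitimate alternative for the permutation structure, though it inherits the almost-everywhere versus everywhere and division-by-zero issues you flag, which the paper's rank argument sidesteps.

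The genuine gap is the one you yourself identify but do not close: a first derivative in $\xt$ alone carries no information about $\partial s_i/\partial\xtm$ at fixed $\h$, so your argument ends with $s_i=v_i(h_{\sigma(i)},\xtm)$ and the elimination of the residual $\xtm$-dependence is left as ``a final use of $k\ge2$ and the strong-exponential-family conditions'' without an actual mechanism. This is not a routine cleanup: it is exactly the point at which the naive NICA argument could fail for the autoregressive model, since $\xtm$ enters both the regression function (through $\phi,\gamma$) and the true log-density (through $\log p_{\ve x}(\xtm\mid\ut)$), and one must verify that after the $\ve u_0$-subtraction the only surviving $\xtm$-dependence on the right-hand side is additive and therefore killed by a mixed derivative $\partial^2/\partial h_{\sigma(i)}\partial(\xtm)_m$. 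To complete your route you would need to write that mixed-derivative identity explicitly and invoke the non-degeneracy of $(q_{i1},\dots,q_{ik})$ with $k\ge2$ to force $\partial v_i/\partial(\xtm)_m\equiv0$ --- in effect reconstructing the cross-derivative computation that the paper performs once, uniformly over all coordinate pairs. Until that is done, the proposal establishes identifiability only up to component-wise transformations that may still depend on $\xtm$, which is strictly weaker than the Theorem's claim.
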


This Theorem guarantees the convergence (consistency) of the learning algorithm. It immediately implies the identifiability of the innovations, up to a permutation and component-wise invertible nonlinearities. This kind of identifiability for innovations is stronger than any previous results in the literature. The estimation is based on the learning of nonlinear logistic regression function, and thus can be easily implemented based on ordinary neural network training.
The Assumption of Variability requires the auxiliary variable $\ve u$ 
to have a sufficiently strong and diverse effect on the distributions of the innovations.
The assumptions on the NVAR model are not too restrictive, and supposed to be satisfied in many applications.
The temporal independence of the innovations is the ordinary assumption for VAR.
The assumption~\ref{GA6} indicates that $\psi_{ij}$ are not functionally redundant;
any $\psi_{ij}$ cannot be represented by a linear combination of $\psi_{il \neq j}$.
Although the assumptions of the nonlinear functions to be trained (assumptions~\ref{GA5} and \ref{GA6}) are not trivial, we assume they are only necessary to have a rigorous theory, and immaterial in any practical implementation.

\subsection{Time-Contrastive Learning Framework (IIA-TCL)}
\label{sec:tcl}

In the special case in which $\ut$ is observable and integer within a finite number of classes $[1, T]$, 
we can also develop a TCL-based framework for the estimation \citep{Hyvarinen2016}.
This special case includes time-segment-wise stationary process in which $\ut$ represents the time segment index at time $t$.

Instead of the two-class logistic regression used in IIA-GCL, IIA-TCL uses a multinomial logistic regression (MLR) classifier for the learning. 
More specifically, we learn a nonlinear MLR using a softmax function 
which represents the posterior distribution of $\ve u$, by the form
\begin{align}
     p(\ut = \t | \xt, \xtm)= 
     \frac{\exp(\sum_{i=1}^n\sum_{j=1}^k z_{ij\t})}
    {\sum_{l=1}^{T} \exp(\sum_{i=1}^n\sum_{j=1}^k z_{ijl})}, \nonumber \\
    z_{ijl} =  w_{ijl} \psi_{ij}(h_i(\xt, \xtm)) + \phi(\xtm, \ut=l) + b_l, \label{eq:tcl_r}
\end{align}
where $w_{ij\t}, b_\t$ are the class-specific weight and bias parameters of the MLR,
and $\psi_{ij}$, $h_i$, and $\phi$ are again scalar-valued functions assumed to have the universal approximation capacity. 
This functional form is designed based on the innovation model given by Eq.~\ref{eq:logpsi}
(see Supplementary Material~\ref{sec:app_tcl}).
This learning framework and the regression function are justified on the following Theorem,
proven in Supplementary Material~\ref{sec:app_tcl}:

\begin{Theorem}
Assume the following:\vspace*{-2mm}
\begin{enumerate} 
\item We obtain observations and auxiliary variable $\ve u$ from an NVAR model (Eq.~\ref{eq:f}), whose augmented model (Eq.~\ref{eq:fa}) is invertible and sufficiently smooth.\label{TA1}
\item The latent innovations of the process are temporally independent, follow the assumption A\ref{A2} with $k \ge 2$, and the sufficient statistics $q_{ij}$ are twice differentiable.\label{TA2}
\item The auxiliary variable $\ve u$ is an integer in $[1, T]$, with $T$ the number of values it takes (classes).\label{TA3}
\item The modulation matrix of size $nk \times (T-1)$\label{TA4}
\begin{align}
        \m L = (\ve\lambda(2) - \ve\lambda(1), \ldots, \ve\lambda(T) - \ve\lambda(1))
\end{align}
has full row rank $nk$, where $\ve\lambda(\t) = (\lambda_{11}(\ve u=\t), \ldots, \lambda_{nk}(\ve u=\t))^T \in \R^{nk}$.
\item We train a multinomial logistic regression with universal approximation capability to
predict the class label (auxiliary variable) $\ut$ from $(\xt, \xtm)$ with regression function in Eq.~\ref{eq:tcl_r}.\label{TA5}
\item The augmented function $\tilde{\h}(\xt, \xtm) = [\h(\xt, \xtm), \xtm]: \R^{2n} \rightarrow \R^{2n}$ is invertible.\label{TA6}
\item The scalar functions $\psi_{ij}$ in Eq.~\ref{eq:tcl_r} are twice differentiable, and for each $i$, the following implication holds:
$(\exists \ve\theta \in \R^k | \forall y, \sum_{j=1}^k \psi_{ij} (y) \theta_j = \text{const}) \Longrightarrow \ve\theta = 0$.\label{TA7}
\end{enumerate}\vspace*{-2mm}
Then, in the limit of infinite data in each class, $\h$ in the regression function provides a consistent estimator of the IIA model:
The functions $h_i(\xt, \xtm)$ give the independent innovations, up to permutation and scalar (component-wise) invertible transformations.
\end{Theorem}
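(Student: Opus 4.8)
The plan is to follow the time-contrastive learning argument of \citet{Hyvarinen2016} (and the GCL proof above), transported to the augmented picture of Eqs.~(\ref{eq:fa})--(\ref{eq:ga}); the two genuinely new ingredients are the use of a \emph{multinomial} rather than binary logistic regression, and the fact that the modulation matrix $\m L$ is the rectangular $nk\times(T-1)$ matrix of Assumption~\ref{TA4}. In the limit of infinite data in each class I may replace the trained classifier by its population optimum. \emph{Step 1 (optimal MLR equals the posterior).} Minimising the multinomial cross-entropy over a universally approximating family (Assumption~\ref{TA5}) forces $p(\ut=\t\mid\xt,\xtm)$ to coincide with the true posterior, so the logits $\sum_{ij}z_{ij\t}$ in Eq.~(\ref{eq:tcl_r}) must equal $\log p(\xt,\xtm\mid\ut=\t)+\log p(\ut=\t)$ plus a term not depending on $\t$ (an arbitrary function of $(\xt,\xtm)$, coming from the softmax normaliser and $p(\xt,\xtm)$).

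\emph{Step 2 (change of variables and cancellation).} Because $\tilde{\f}$ is invertible and smooth (Assumption~\ref{TA1}) and acts as the identity on the $\xtm$-block, the change of variables $(\xt,\xtm)\mapsto(\g(\xt,\xtm),\xtm)$ has Jacobian determinant $\det(\partial\g/\partial\xt)$, and, by temporal independence of the innovations (Assumption~\ref{TA2}), $\st=\g(\xt,\xtm)$ is conditionally independent of $\xtm$ given $\ut$, so $p(\xt,\xtm\mid\ut)=p(\st\mid\ut)\,p(\xtm\mid\ut)\,|\det(\partial\g/\partial\xt)|$. Inserting the exponential-family form~(\ref{eq:logpsi}) and subtracting the equation for the reference class $\t=1$, every class-independent term (the Jacobian, the base measures $Q_i$, and the unknown $(\xt,\xtm)$-term) drops out, while all remaining pieces depending only on $(\xtm,\t)$ — the factor $\log p(\xtm\mid\ut)$, the $\phi(\xtm,\ut)$ and $b_\t$ of Eq.~(\ref{eq:tcl_r}), the normalisers $Z_i(\ut)$, and $\log p(\ut)$ — collect into a single nuisance $\rho(\xtm,\t)$. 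Using the two bijections $(\xt,\xtm)\leftrightarrow(\g,\xtm)$ and, by Assumption~\ref{TA6}, $(\xt,\xtm)\leftrightarrow(\h,\xtm)$, I reparametrise the input by $(\st,\xtm)$ and write $h_i(\xt,\xtm)=\bar h_i(\st,\xtm)$, obtaining for each $\t\in\{2,\dots,T\}$
\begin{equation*}
\sum_{i,j}(w_{ij\t}-w_{ij1})\,\psi_{ij}(\bar h_i(\st,\xtm))=\sum_{i,j}(\lambda_{ij}(\t)-\lambda_{ij}(1))\,q_{ij}(s_i)+\rho(\xtm,\t).
\end{equation*}

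\emph{Step 3 (linear algebra).} Stacking these $T-1$ equations, the matrix multiplying $(q_{ij}(s_i))_{ij}$ is exactly $\m L^{\T}$; since $\m L$ has full row rank $nk$ (Assumption~\ref{TA4}), $\m L\m L^{\T}$ is invertible, and left-multiplying by $(\m L\m L^{\T})^{-1}\m L$ gives $\m B\,\ve{\psi}(\bar{\h}(\st,\xtm))=\ve{q}(\st)+\ve{\nu}(\xtm)$, where $\ve{\psi}$ and $\ve{q}$ collect the $\psi_{ij}$ and $q_{ij}$, $\ve{\nu}$ depends on $\xtm$ only, and $\m B$ is an $nk\times nk$ matrix built from the learned MLR weights. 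The strong-exponentiality conditions in A\ref{A2} make $1,q_{i1},\dots,q_{ik}$ linearly independent for each $i$, so $\ve{q}(\st)$ sweeps a full-dimensional affine set in $\R^{nk}$ and $\m B$ must be invertible; then, holding $\st$ fixed and varying $\xtm$, $\ve{\psi}\circ\bar{\h}$ is separable into a function of $\st$ plus a function of $\xtm$, and comparing level sets (using that $\bar{\h}(\cdot,\xtm)$ is a bijection and, by Assumption~\ref{TA7}, the $\psi_{ij}$ are not functionally redundant) shows the $\xtm$-dependence of each $\bar h_i$ reduces to a trivial reparametrisation, so we may take $\bar h_i$ to depend on $\st$ alone, giving $\ve{q}(\st)=\m B^{-1}\ve{\psi}(\bar{\h}(\st))+\text{const}$.

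\emph{Step 4 (component-wise identifiability).} Finally I would invoke the Jacobian/Hessian argument for strongly exponential families used in the GCL proof (cf.\ \citet{Khemakhem2020}): differentiating $\ve{q}(\st)=\m B^{-1}\ve{\psi}(\bar{\h}(\st))$, within each source block $i$ the $k$ rows of the Jacobian of $\ve{\psi}\circ\bar{\h}$ are all proportional to $\nabla_{\st}\bar h_i$, and using $k\ge2$, the twice-differentiability of $q_{ij}$ and $\psi_{ij}$, and Assumption~\ref{TA7}, one shows $\m B^{-1}$ must permute the $n$ blocks (with an invertible $k\times k$ block inside each), which is only consistent with each $\bar h_i$ being an invertible scalar function of a single $s_{\pi(i)}$. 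Unwinding the reparametrisations, $h_i(\xt,\xtm)$ is an invertible scalar transform of $s_{\pi(i)}$, i.e.\ the innovations are identified up to permutation and component-wise invertible nonlinearities. I expect this last block-permutation step — together with the bookkeeping in Step~3 needed to rule out residual $\xtm$-dependence in $\bar{\h}$ — to be the main obstacle; Steps~1, 2 and the change-of-variables part of Step~3 are routine given the stated assumptions.
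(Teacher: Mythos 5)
Your proposal is correct and follows essentially the same route as the paper's proof: equating the optimal softmax with the true posterior, cancelling the Jacobian and all class-independent terms against a reference class $\t=1$, stacking the $T-1$ equations so that $\m L^{\T}$ multiplies $\ve q(\st)$, changing variables to $(\st,\xtm)$ via $\h\circ\tilde{\f}$, and finally forcing a block-permutation structure on the Jacobian (your pre-multiplication by $(\m L\m L^{\T})^{-1}\m L$ is just a reorganization of the paper's step showing $\m W$ has full row rank via the $k$ points where the $\ve q'$ are linearly independent). The one step you leave as a sketch --- ruling out $\xtm$-dependence and multi-source dependence of each $\bar h_i$ --- cannot be done by ``comparing level sets'' of the separable form alone (a single invertible $\psi$ would permit $h=\psi^{-1}(a(\st)+b(\xtm))$); the paper instead differentiates the stacked equation twice in $y_c,y_d$ to get $\psi_{ij}'(v_i)v_i^{c,d}+\psi_{ij}''(v_i)v_i^c v_i^d=0$ for all $j$, and uses $k\ge 2$ with Assumption~\ref{TA7} (full row rank of the matrix of pairs $(\psi_{ij}',\psi_{ij}'')$) to conclude $v_i^c v_i^d=0$, exactly the mechanism you anticipated would be the main obstacle.
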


Many of the assumptions are the same as those in IIA-GCL, except for the specifics of the innovation model (assumptions~\ref{TA3} and \ref{TA4}) and the learning algorithm (assumption~\ref{TA5}).
The estimation is based on self-supervised nonlinear MLR,
and thus can be easily implemented based on ordinary neural network training, like IIA-GCL.
Although the estimation methods are different, the identifiability result implied here by IIA-TCL is the same as above by IIA-GCL. 
Note that here the limit of infinite data takes the form that each class (value of $T$) has an infinite number of data points. In practice, each class is thus required to have a sufficient number of samples, so $T$ needs to be much smaller than the total number of data points; this would be natural if $T$ is a segment index
(see Fig.~\ref{fig:sim1}b for the empirical result of this point).

\subsection{Hidden Markov Model Framework (IIA-HMM)}
\label{sec:hmm}

Next, we consider a special case where no $\ve u$ is observed, and no segmentation is imposed as in TCL. Instead, we assume the nonstationarity is described by hidden states following a discrete-time Markov model \citep{Halva2020}. 
This framework does not require $\ut$ to be observable unlike the previous two frameworks, and thus can learn the model in an ``purely unsupervised'' manner. It is essentially like TCL but the segmentation is inferred as part of the learning process.

We assume the following temporal structure for $\ve u$;
\begin{enumerate} \def\labelenumi{A\theenumi.}
\setcounter{enumi}{1}
\item The latent auxiliary variable $\ut \in \{1, \ldots, C\}$ represents a hidden random states at each time point,
and it is described by a Markov chain governed by a time-invariant transition-probability matrix $\m A \in \R^{C \times C}$,
where $A_{i,j}$ denotes the probability of transitioning from state $i$ to $j$. \label{A3}
\end{enumerate}
%

From the NVAR observation model with the hidden Markov chain $\ut$ generating the innovations for each data point $t$, the likelihood is given by,
using the probability transformation formula,
\begin{align}
        &p(\x_0, \x_1, \ldots, \x_T; \m A, \ve\theta) = p(\x_0) \prod_{t=1}^{T} \left | \m J \tilde{\g}(\xt, \xtm) \right | \nonumber \\
        &\times  \sum_{\ve u_1, \ldots, \ve u_T} \pi_{\ve u_1} p(\s_1 | \ve u_1; \ve\theta) \prod_{t=2}^{T} \m A_{\ve u_{t-1}, \ve u_{t}} p(\s_t | \ve u_t; \ve\theta) \label{eq:hmm_likelihood}
\end{align}
where $\ve\theta=\{\lambda, \g\}$, $\lambda$ denotes the parameters of the innovation model with omitting subscripts (Eq.~\ref{eq:logpsi}),
$\g$ is the demixing model, whose augmented model is $\tilde{\g}$ (Eq.~\ref{eq:ga}),
$\ve \pi = (\pi_1, \ldots, \pi_C)$ is the stationary distribution of the latent state $\ve u$,
$p(\x_0)$ is the marginal distribution of $\x_0$,
and $\m J \tilde{\g}$ denotes the Jacobian of $\tilde{\g}$.
The summation (marginalization) is taken over all possible combinations of $\ve u_1, \ldots, \ve u_T$.

Unlike the previous two frameworks which are based on self-supervised learning, 
the estimation of the model has to be done by a maximum-likelihood framework since $\ut$ is unobservable here.
For example, EM algorithm can be deployed when the innovation model was chosen from
a well-known family such that the normalizing constant is tractable.
The algorithm basically follows that of \citet{Halva2020}, with some differences
coming from the autoregressive structure in the observations;
the demixing model $\tilde{\g}$ has the augmented structure defined in Eq.~\ref{eq:ga},
and the marginal distribution model of the observation $p(\x_0)$ is required.
The E-step finds the optimal sequence of the latent states $(\ve u_1, \ldots, \ve u_T)$,
and M-step updates the parameters of the model so as to maximize the lower bound.
Since a closed-form of the update for $\g$ is not available in many cases, a gradient ascent update is taken instead.
Although the gradient of the determinant of the Jacobian $| \m J \tilde{\g} |$ is generally considered to be difficult,
recent developments of autograd packages, such as JAX, makes it possible to calculate them numerically
up to moderate dimensions \citep{Halva2020}.
Moreover, it can be computed using the recently proposed relative gradient method \citep{Gresele2020}.
The identifiability of this framework is discussed in Supplementary Material~\ref{sec:app_hmm}.

\section{EXPERIMENTS}

\subsection{Simulation~1: IIA-GCL for Artificial Dynamics with Nonstationary Innovations}
\label{sec:sim1}

\paragraph{Data Generation}
We generated data from an artificial NVAR process with nonstationary innovations.
The innovations were randomly generated from a Gaussian distribution 
by modulating its mean and standard deviation across time $t$, i.e., $\ut = t$.
The modulations were designed to be temporally smooth and continuous.
The dimensions of the observations and innovations ($n$) were 20.
As the NVAR model, we used a multilayer perceptron we call NVAR-MLP,
which takes a concatenation of $\xtm$ and $\st$ as an input, then outputs $\xt$.
The goal of this simulation is to estimate the innovations $\s$ only from the observable time series $\x$,
without knowing the parameters of the NVAR-MLP.
See Supplementary Material~\ref{sec:app_sim1} for more details of the experimental settings.

\paragraph{Training}
Considering the innovation model with $\ut = t$, 
we here used IIA-GCL for the estimation of the latent innovations.
We adopted MLPs as the nonlinear scalar functions in Eq.~\ref{eq:gcl_r}.
The nonlinear regression function was trained by back-propagation with a momentum term
so as to discriminate the real dataset from its $\ut$-randomized version. 
%
For comparison, we also applied NICA based on GCL (NICA-GCL; \citet{Hyvarinen2019}),
an NVAR with additive innovation model (AD-NVAR),
and variational autoencoder (VAE; \citet{Kingma2014}) to the same data. 

\paragraph{Result}
The IIA-GCL framework could reconstruct the innovations reasonably well even for the nonlinear mixture cases ($L > 1$) (Fig.~\ref{fig:sim1}a). 
We can see that a larger amount of data make it possible to achieve higher performance, 
and higher complexity of the NVAR model makes learning more difficult. 
AD-NVAR performed well for the linear mixture case ($L=1$) because the additive innovation model is equivalent to the general NVAR model in the linear case; however, it was much worse in the nonlinear case.
As expected, the other methods performed worse than IIA-GCL because their model did not match well to the NVAR generation model.

\begin{figure*}[h]
 \centering
 \includegraphics[width=2\columnwidth]{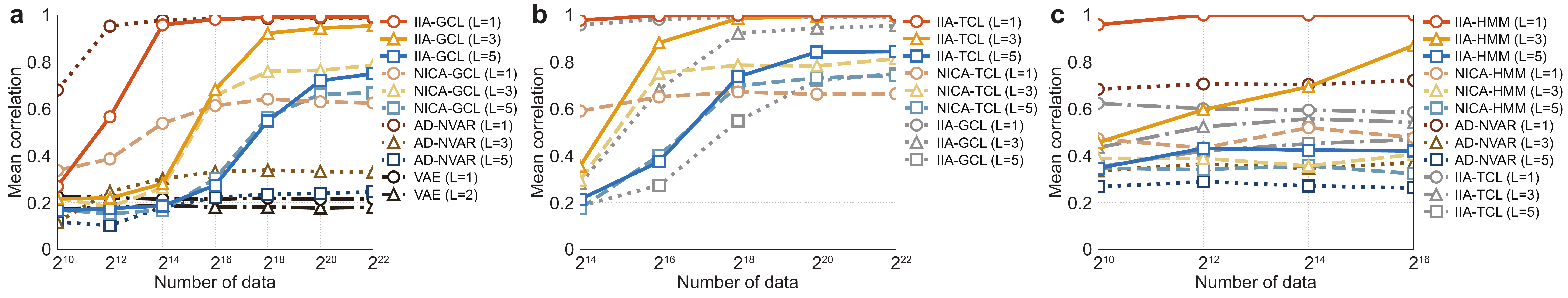}
 \caption{(Simulation) Estimation of the latent innovations from unknown artificial NVAR process by IIA. 
 (\textsf{\textbf{a}})~(Simulation~1; IIA-GCL) Mean absolute correlation coefficients between innovations and their estimates 
by IIA-GCL (solid lines), with different settings of the complexity (number of layers $L$) of the NVAR models and data points. 
For comparison: NICA based on GCL (NICA-GCL, dashed line),
NVAR with additive innovation model (AD-NVAR, dotted line), 
and variational autoencoder (VAE, dash-dot line).
IIA-GCL generally has higher correlations than the baseline methods.
 (\textsf{\textbf{b}})~(Simulation~2; IIA-TCL) Estimation performances by the IIA-TCL framework (solid lines), evaluated by the same data used in Simulation~1. 
For comparison: NICA based on TCL (NICA-TCL, dashed line)
and IIA-GCL shown in \textsf{\textbf{a}} (dotted line).
 (\textsf{\textbf{c}})~(Simulation~3; IIA-HMM) Estimation performances by the IIA-HMM framework (solid lines). 
For comparison: NICA based on HMM (NICA-HMM, dashed line),
NVAR with additive innovation model (AD-NVAR, dotted line),  
and IIA-TCL (dash-dot line).
}
 \label{fig:sim1}
\end{figure*}


\subsection{Simulation~2: IIA-TCL for Artificial Dynamics with Nonstationary Innovations}
\label{sec:sim2}

\paragraph{Training}
Next, to evaluate the IIA-TCL framework, 
we applied it to the same data used in Simulation~1.
For IIA-TCL, we first divided the time series into 256 equally-sized segments,
and used the segment label as the auxiliary variable $\ut$;
i.e., we assume that the data are segment-wise stationary,
which should be approximately true because the modulations were designed to be temporally smooth and continuous.
%
The training and evaluation methods follow those in Simulation~1.
For comparison, we also applied NICA based on TCL (NICA-TCL; \citet{Hyvarinen2016}).
See Supplementary Material~\ref{sec:app_sim2} for more details.

\paragraph{Result}
IIA-TCL performed better than NICA-TCL (Fig.~\ref{fig:sim1}b).
In addition, even though the innovation model matches IIA-GCL better than IIA-TCL
(the modulations are temporally smooth and continuous, and thus not segment-wise stationary),
IIA-TCL achieved slightly better performances than IIA-GCL (note that the performances of IIA-GCL is the same as those in Fig.~\ref{fig:sim1}a because we used the same data); this finding is consistent with the comparison of NICA-GCL and NICA-TCL by \citet{Hyvarinen2019}.
As with IIA-GCL, a larger number of data points leads to higher performance (i.e. the method seems to converge), and again, higher complexity of the NVAR models makes learning more difficult.
See also Supplementary Material~\ref{sec:app_sim2_2d} in the two dimensional case to visually see the difference of the estimation performances.

\subsection{Simulation~3: IIA-HMM for Artificial Dynamics with Hidden Markov Process}
\label{sec:sim3}

\paragraph{Data Generation}
We generated data from an artificial NVAR process with hidden Markov model.
The innovations were generated based on hidden Markov chain
with modulating the mean and the variance of Gaussian distribution for each state.
The observations were then obtained by the same method described in Simulation~1,
using the generated innovations.
The dimensions of the observations and innovations ($n$) were 5,
and the number of latent states ($C$) was 11.
See Supplementary Material~\ref{sec:app_sim3} for more details.

\paragraph{Training}
We used here EM algorithm to maximize the likelihood for estimating the 
parameters of the demixing model and the innovation process, as in \citet{Halva2020}.
For comparison, we also applied NICA-HMM \citep{Halva2020}, IIA-TCL, and AD-NVAR.

\paragraph{Result}
IIA-HMM performed better than the other baseline methods (Fig.~\ref{fig:sim1}c),
except for the most complex case ($L=5$) 
possibly because of the difficulty of the optimization due to the larger number of parameters.
The worse performances of IIA-TCL 
are likely to be due to the inconsistency between the artificial temporal segments used for the training
and the actual sequence of the hidden states, 
and also much smaller number of latent states compared to the number of the artificial segments.
AD-NVAR did not perform well even for the linear case ($L=1$) because the innovations are not necessarily marginally independent each other this time.
As with the previous frameworks, a larger number of data points leads to higher performance, and again, 
higher complexity of the NVAR models makes learning more difficult.

\subsection{Experiments on Real Brain Imaging Data}
\label{sec:exp}

\begin{figure*}[h]
 \centering
 \includegraphics[width=1.8\columnwidth]{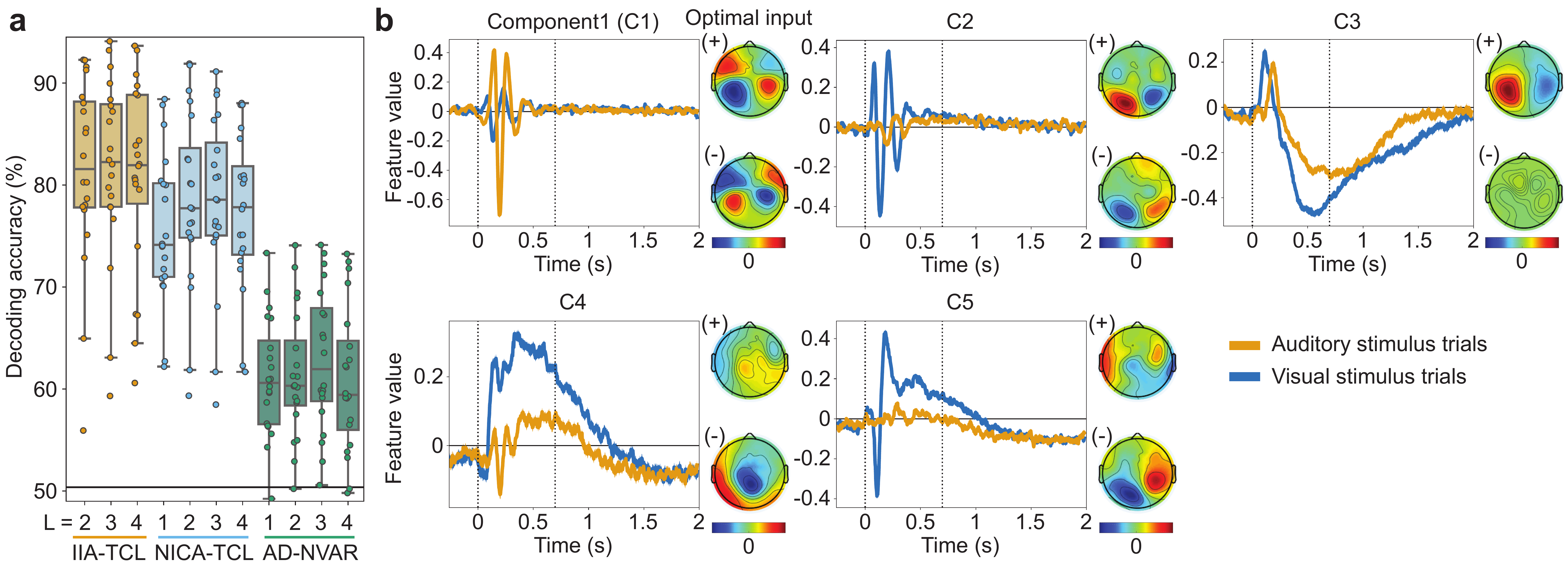}
 \caption{IIA-TCL on the electrical activity data measured by MEG from the human brain during auditory or visual stimuli of German nouns. 
 (\textsf{\textbf{a}})~Decoding accuracies of the stimulus category predicted from the innovations extracted by IIA-TCL and the other baseline methods.
 The performance was measured by one-subject-out cross-validation (OSO-CV),
 with changing the number of layers $L$ for each method. Each point represents a testing accuracy on a target subject.
 The black horizontal line indicates the chance level.
 (\textsf{\textbf{b}})~The temporal pattern and the spatial specificity of each component trained by IIA-TCL ($L=3$).
 (Left) The temporal patterns of the components averaged separately for auditory (orange) and visual trials (blue).
 0~s is the onset of the stimulus, and the latter vertical line represents the average duration of the stimuli.
 (Right) The spatial topographies of the optimal input (MEG signal; top view) which maximizes ($+$) and minimizes ($-$) the component.
 }
 \label{fig:exp}
\end{figure*}

To evaluate the applicability of IIA to real data, we applied it on
multivariate time series of electrical activities of the human brain, measured by magnetoencephalography (MEG). 
In particular, we used a dataset measured during auditory or visual presentations of words \citep{Westner2018}.
Although ICA is often used to analyze brain imaging data, relying on the assumption of mutual independence of the hidden components, 
the event-related components (such as event-related potentials; ERPs)
are not likely to be independent because they may have similar temporal patterns time-locked to the stimulation.
However, the innovations generating the components should still be independent because they would be generated by different brain sources,
which motivates us to use IIA rather than ICA (see Supplementary Material~\ref{sec:app_exp} for the details of the data and settings).

\paragraph{Data and Preprocessing}
We used a publicly available MEG dataset \citep{Westner2018}. 
Briefly, the participants were presented with a random word selected from 420 unrelated German nouns 
either visually or auditorily, for each trial. 
MEG signals were measured from twenty healthy volunteers by a 148-channel magnetometer
(219.1$\pm$22.4 trials for each subject; 2,207 auditory and 2,174 visual trials in total for all subjects).
We band-pass filtered the data between 4~Hz and 125~Hz (sampling frequency = 300~Hz).
The dimension of the data was reduced to 30 by PCA.

\paragraph{IIA Settings}
We used IIA-TCL for the training, by assuming a third-order NVAR model (NVAR(3)) and the segment-wise-stationarity of the latent innovations.
The trial data were divided into 84 equally sized segments of length of 8 samples (26.7~ms), 
and the segment label was used as the auxiliary variable $\ut$. 
The same segment labels were given across the trials; however,
considering the possible stimulus-specific dynamics of the brain, we assigned different labels for the auditory and visual trials.
In total, there are 168 segments (classes) to be discriminated by MLR.
We used MLPs for the nonlinear scalar functions (Eq.~\ref{eq:tcl_r}), 
and fixed the number of components to 5.
We fixed the time interval between two consecutive samples to 3 (10~ms). 
%

\paragraph{Evaluation Methods}
For evaluation, we performed classification of the stimulus modality (auditory or visual) by using the estimated innovations.
The classification was performed using a linear support vector machine (SVM) classifier trained on the stimulation label 
and sliding-window-averaged innovations obtained for each trial. 
The performance was evaluated by the generalizability of a classifier across subjects, i.e., one-subject-out cross-validation (OSO-CV).
For comparison, we also evaluated NICA-TCL \citep{Hyvarinen2016} and AD-NVAR(3).
We omitted $L=1$ for IIA-TCL because of the instability of training.
We visualized the spatial characteristics of each innovation component
by estimating the optimal (maximal and minimal) input $\xt$ while fixing $\ve x_{t-1:t-3}$ to zero.

\paragraph{Results}
Figure~\ref{fig:exp}a shows the decoding accuracies of the stimulus categories,
across different methods and the number of layers for each model. The performances by IIA-TCL 
with nonlinear models ($L \ge 2$) were significantly higher than the other baseline methods
($p < 0.05$; Wilcoxon signed-rank test, FDR correction),
which indicates the importance of the modeling of the MEG signals by NVAR, especially with the
nonlinear (non-additive) interactions of the innovations.

The left panels of Fig.~\ref{fig:exp}b show the temporal patterns of the innovations during the auditory and visual stimuli.
Some components have clear differences between the stimulus modalities,
which implies that those components are related to the stimulus-specific dynamics of the brain;
e.g., C1 and C2 represent auditory- and visual-relevant innovations, respectively.
Such stimulus-specificity can be also seen from the spatial characteristics of the components; 
C1 is strongly activated by the MEG signals around auditory areas of the brain, while C2 is more activated by the visual areas.
C3 seems to represent stimulus-evoked activities on the parietal region caused by both categories.
Those results show that IIA-TCL extracted reasonable components (innovations) relevant to the external stimuli
automatically from the data in a data-driven manner.

\section{DISCUSSION}
\label{sec:discussion}

IIA can be seen as a generalization of the recently proposed NICA frameworks \citep{Halva2020,Hyvarinen2016,Hyvarinen2019}, with the important difference that observations can have recurrent temporal structure. The theory strictly includes NICA as a special case, since the main assumptions can be satisfied even if the NVAR model (Eq.~\ref{eq:f}) does not actually depend on $\xtm$,  which corresponds to the instantaneous nonlinear mixture model of NICA:  $\xt = \f_\text{ICA}(\st)$. This connection can be also seen by  comparing the regression functions; by omitting the dependencies of Eqs.~\ref{eq:gcl_r} and \ref{eq:tcl_r} on $\xtm$, 
we can obtain the same algorithms as NICA (\citet{Hyvarinen2016} with k=1, and \citet{Hyvarinen2019}). This indicates that the regression functions of IIA can learn NICA models as a special case.
See Supplementary Material~\ref{sec:app_sim2_2d} for the empirical comparison in the two dimensional case.

Applying IIA on time series has some practical advantages compared to NICA.
First, autoregressive structures are generally inherent in any kinds of dynamics,
and their explicit modeling is beneficial for the estimation.
Second, innovations are usually more independent mutually than the processes generated by them, because the independence of processes implies the independence of their innovations, but not vice versa, as argued in the linear case by \citet{Hyvarinen1998}. Thus, innovations are likely to give a better fit to any model assuming independence of the latent variables.

While IIA estimates innovations from the observed time series, the NVAR model $\f$ is left unknown, unlike in ordinary VAR analyses. In practice, we can estimate $\f$ after IIA as a post-processing, by fitting a nonlinear function which outputs $\xt$ from $\xtm$ and the estimated $\st$. Since IIA guarantees the estimation of $\ve s$ up to a permutation and element-wise invertible nonlinearities, this should be possible if the model to be fitted has universal approximation capability.

\section{CONCLUSION}

We proposed independent innovation analysis (IIA) as a new general framework
to nonlinearly extract innovations hidden in a time series. 
In contrast to the common simplifying assumption of additive innovations,
IIA can deal with a general NVAR model in which innovations are not additive. Any general nonlinear interactions between the innovations and the observations are allowed.
To guarantee identifiability, IIA requires some assumptions on the innovations, 
in particular mutual independence conditionally on an auxiliary variable which also needs to modulate the distributions of the innovations. 
A typical case would be nonstationary innovations mutually independent at each time point.

We proposed three practical estimation methods. Two of them
were based on a self-supervised training of a nonlinear feature extractor by (multinomial) logistic regression. 
They can thus be easily implemented by ordinary neural network training.
The third one is a ``purely unsupervised'' framework based on maximum-likelihood estimation, specifically applicable when an 
auxiliary segmentation variable is unobservable (or in practice, we do not want to impose some simple segmentation).
The consistency of the estimation is guaranteed up to a permutation and component-wise invertible nonlinearity, 
which implies the strongest identifiability proof of general NVAR in the literature, by far.
IIA can be seen as a generalization of recently proposed NICA frameworks, and includes them as special cases.

Experiments on real brain imaging data by MEG showed distinctive components relevant to the external-stimulus categories. 
This result suggests a wide applicability of the method to different kinds of time series such as video, econometric, and biomedical data, in which innovation plays an important role.

\subsubsection*{Acknowledgements}
This research was supported in part by JSPS KAKENHI 18KK0284, 19K20355, and 19H05000, and JST PRESTO JPMJPR2028. A.H. was funded by a Fellow Position from CIFAR, the DATAIA convergence institute as part of the ``Programme d'Investissement d'Avenir" (ANR-17-CONV-0003) operated by Inria, and the Academy of Finland (project \#330482).


\bibliographystyle{plainnat}
\bibliography{references}


\normalsize
\newpage
\onecolumn
\appendix

\section*{Supplementary Materials for \textit{\textbf Independent Innovation Analysis for Nonlinear Vector Autoregressive Process}}
\addcontentsline{toc}{section}{Appendices}
\renewcommand{\thesubsection}{\Alph{subsection}}

\subsection{Proof of Theorem~1}
\label{sec:app_gcl}

The log-pdf of $\tilde{\ve x}$ is given by, using the probability transformation formula,
\begin{align}
        \log p(\tilde{\ve x}(t)) &= \log p(\xt, \xtm, \ut) \nonumber \\
        &= \log p_{\tilde{\s}}(\tilde{\ve g}(\xt, \xtm) | \ut) + \log p(\ut) + \log | \det \m J \tilde{\ve g}(\xt, \xtm) | \nonumber \\
        &= \log p_{\tilde{\s}}(\ve g(\xt, \xtm) , \xtm | \ut) + \log p(\ut) + \log | \det \m J \tilde{\ve g}(\xt, \xtm) | \nonumber \\
        &= \log p_\s(\ve g(\xt, \xtm) | \ut)  + \log p_\x(\xtm | \ut) + \log p(\ut) + \log | \det \m J \tilde{\ve g}(\xt, \xtm) | \label{eq:gcl_pxxu}
\end{align}
where $p_{\tilde{\s}}$, $p_\s$, and $p_\x$ are the conditional pdfs of $(\s, \x)$, $\s$, and $\x$, respectively,
$\m J$ denotes the Jacobian, and $s_i = g_{i}(\xt, \xtm)$ by definition.
The third equation is from the structure of the augmented demixing model (Eq.~\ref{eq:ga}), and
the last equation is from the temporal independence of $\st$ (assumption~\ref{GA2}).

By well-known theory \citep{Gutmann2012,Hastie2001}, 
after convergence of logistic regression, with infinite data and a function approximator with universal approximation capability, 
the regression function (Eq.~\ref{eq:gcl_r}) will equal the difference of the log-pdfs in the two classes:
\begin{align}
        &\sum_{i=1}^n \sum_{j=1}^k \psi_{ij}(h_{i}(\xt, \xtm)) \mu_{ij}(\ut) + \phi(\xtm, \ut) + \alpha(\ut) + \beta(\h(\xt, \xtm)) + \gamma(\xtm) \nonumber \\
        &= \log p_\s(\ve g(\xt, \xtm) | \ut) + \log p_\x(\xtm | \ut) + \log p(\ut) + \log | \det \m J \tilde{\ve g}(\xt, \xtm) |  \nonumber \\
        &- \log p_{\bar{\s}}(\g(\xt, \xtm)) - \log p_{\bar{\x}}(\xtm) - \log p(\ut) - \log | \det \m J \tilde{\ve g}(\xt, \xtm) | \nonumber \\
        &= \sum_{i=1}^{n} \left [ Q_i(g_i(\xt, \xtm)) - Z_i(\ut) + \sum_{j=1}^k q_{ij} (g_i(\xt, \xtm)) \lambda_{ij}(\ut) \right ] + \log p_\x(\xtm | \ut) \nonumber \\
        &- \log p_{\bar{\s}}(\g(\xt, \xtm)) - \log p_{\bar{\x}}(\xtm) \label{eq:gcl_eq}
\end{align}
where $p_{\bar{\s}}$ and $p_{\bar{\x}}$ are the marginal pdfs of the innovations and observations when $\ve u$ is integrated out,
and the last equation came from the conditional exponential pdf model of $\s$ (A\ref{A2}).
The Jacobians and marginals $\log p(\ve u)$ cancel out here.
Considering its factorization form and the distinctive dependency of each term on $\xt$, $\xtm$, and $\ut$, the approximation solution is possible as
\begin{align}
        \psi_{ij}(h_{i}(\xt, \xtm)) &= q_{ij} (g_i(\xt, \xtm))  \nonumber \\
        \mu_{ij}(\ut) &= \lambda_{ij}(\ut)  \nonumber \\ 
        \phi(\xtm, \ut) &= \log p_\x(\xtm | \ut) \nonumber \\
        \alpha(\ut) &= - \sum_{i=1}^{n} Z_i(\ut) \nonumber \\
        \beta(\h(\xt, \xtm)) &= \sum_{i=1}^{n} Q_i(g_i(\xt, \xtm)) - \log p_{\bar{\s}}(\g(\xt, \xtm)) \nonumber \\
        \gamma(\xtm) &= - \log p_{\bar{\x}}(\xtm). \label{eq:gcl_eq_terms}
\end{align}
Next, we have to prove that this is the only solution up to the indeterminacies given in the Theorem.
Let $\ve u_0, \ldots, \ve u_{nk}$ be the points given by assumption~\ref{GA3} in the Theorem.
We plug each of those $\ve u_l$ to obtain $nk+1$ equations. 
By collecting those equations into rows, with subtracting the first equation for $\ve u_0$ from the
remaining $nk$ equations:
\begin{align}
	\m M^T \ve\psi(\ve h(\xt, \xtm)) + \ve\phi(\xtm) + \ve\alpha
	= \m L^T \ve q(\st) +  \ve p(\xtm) + \ve z, \label{eq:gcl_eq_vec}
\end{align}
where $\m M \in \R^{nk \times nk}$ is a matrix of $\mu_{ij}(\ve u_l) - \mu_{ij}(\ve u_0)$, with the product of $i, j$ giving row index and $l$ column index,
$\m L$ is a matrix of $\lambda_{ij}(\ve u_l) - \lambda_{ij}(\ve u_0)$ given in the assumption~\ref{GA3} in the Theorem,
$\ve\psi(\ve h(\xt, \xtm)) = (\psi_{11}(h_{1}(\xt, \xtm)), \ldots, \psi_{nk}(h_{n}(\xt, \xtm)))^T$,
$\ve q(\st) = (q_{11}(s_1(t)), \ldots, q_{nk}(s_n(t)))^T$,
and the other vectors are the collection of the corresponding terms in Eq.~\ref{eq:gcl_eq} at the $nk$ points with all subtracting the one with $l=0$;
$\ve\phi(\xtm) =  (\phi(\xtm, \ve u_1), \ldots, \phi(\xtm, \ve u_{nk}))^T - \ve 1 \phi(\xtm, \ve u_0)$,
$\ve 1$ is a $nk \times 1$ vector of ones,
$\ve\alpha = (\alpha(\ve u_1), \ldots, \alpha(\ve u_{nk}))^T - \ve 1 \alpha(\ve u_0)$,
$\ve p(\xtm) = (\log p_\x(\xtm | \ve u_1), \ldots, \log p_\x(\xtm | \ve u_{nk}))^T - \ve 1 \log p_\x(\xtm | \ve u_0)$,
and $\ve z = (- \sum_{i=1}^{n} Z_i(\ve u_1), \ldots, - \sum_{i=1}^{n} Z_i(\ve u_{nk}))^T + \ve 1 \sum_{i=1}^{n} Z_i(\ve u_0)$.
In both sides of the equation, the terms not depending on $\ut$ disappeared by the subtraction with $l=0$.
Let a compound demixing-mixing function $\ve v(\st, \xtm) = \h \circ \tilde{\f}(\st, \xtm)$, 
and change variables to $\y = [\y_1, \y_2] = [\st, \xtm]$, we then have
\begin{align}
	\m M^T \ve\psi(\ve v(\y)) + \ve\phi(\y_2) + \ve\alpha
	= \m L^T \ve q(\y_1) +  \ve p(\y_2) + \ve z. \label{eq:gcl_eq_vec_v}
\end{align}

Firstly, we will show that $\m M$ is invertible.
From the definition of $\ve q(\y_1)$, its partial derivative with respect to $y_{1i}$
is $\ve q'(y_{1i}) = (0, \ldots, 0, q_{i1}'(y_{1i}), \ldots ,q_{ik}'(y_{1i}), 0, \ldots, 0)^T$.
According to Lemma~3 of \citet{Khemakhem2020}, for $y_{1i}$ which satisfies A\ref{A2},
there exist $k$ points $(\bar{y}_{1i}^1, \ldots, \bar{y}_{1i}^k)$ 
such that $(\ve q'(\bar{y}_{1i}^1), \ldots, \ve q'(\bar{y}_{1i}^k))$ are linearly independent.
By differentiating Eq.~\ref{eq:gcl_eq_vec_v} with respect to $y_{1i}$ 
and collecting their evaluations at such $k$ distinctive points for all $i$, we get
\begin{align}
	\m M^T \tilde{\m Q} = \m L^T \m Q,
\end{align}
where $\m Q  \in \R^{nk \times nk}$ is a matrix collecting $\ve q'(\bar{y}_{1i}^l)$ to the columns indexed by $(i, l)$,
and $\tilde{\m Q}$ is a collection of partial derivatives of $\ve\psi(\ve v(\y))$ evaluated at the same points.
$\m Q$ is invertible (through a combination of Lemma~3 of \citet{Khemakhem2020} and the fact that each component of $\ve q$ is univariate), and thus the right-hand side is invertible because $\m L$ is invertible as well (assumption~\ref{GA3}).
The invertibility of the right-hand side implies the invertibility of $\m M$ and $\tilde{\m Q}$.

Now, let an augmented compound demixing-mixing function $\tilde{\ve v}(\y) = [\tilde{\ve v}_1(\y), \tilde{\ve v}_2(\y)] = \tilde{\h} \circ \tilde{\f}(\y)$,
where $\tilde{\h}$ is the augmented function defined in the assumption~\ref{GA5} in the Theorem.
The $\tilde{\ve v}_1(\y)$ corresponds to $\ve v(\y)$ defined above.
Note that $\tilde{\ve v}$ is invertible because both $\tilde{\h}$ and $\tilde{\f}$ are invertible.
What we need to prove is that $\tilde{\ve v}$ is a block-wise invertible point-wise function,
in the sense that $\tilde{v}_{1i}$ is a function of only one $y_{1j_i}$ and not of any of $y_{2j_i}$, and vise versa.
This can be done by showing that the product of any two distinct partial derivatives
of any component is always zero, and the Jacobian $\m J_{\tilde{\ve v}} \in \R^{2n \times 2n}$ is block diagonal;
the upper and lower block correspond to $\y_1$ and $\y_2$ respectively.
Along with invertibility, this means that each component depends exactly on one variable of the corresponding block ($\y_1$ or $\y_2$).
Below, we show that separately for $\m J_{\ve v} \in \R^{n \times 2n}$ and $\m J_{\tilde{\ve v}_2} \in \R^{n \times 2n}$.
Firstly, this is obviously true for $\m J_{\tilde{\ve v}_2}$ because $\tilde{\ve v}_2(\y)$ is just an identity mapping of
$\y_2$ from the definitions of $\tilde{\h}$ and $\tilde{\f}$, and does not depend on $\y_1$;
the lower non-zero block of $\m J_{\tilde{\ve v}}$ is an identity matrix.
Next, we will show that for $\m J_{\ve v}$.
We differentiate Eq.~\ref{eq:gcl_eq_vec_v} with respect to $y_{c}, 1 \le c \le n$ (an element of $\y_1=\st$), 
and $y_{d}, c < d \le 2n$, and get
\begin{align}
	\m M^T \frac{\partial^2}{\partial y_c \partial y_d} \ve\psi(\ve v(\y)) = 0.
\end{align}
From the invertibility of $\m M$ and the calculation of differentials, we get
\begin{align}
	\frac{\partial^2}{\partial y_c \partial y_d} \ve\psi(\ve v(\y)) = \m\Psi(\y)^T \ve\upsilon(\y) = 0,
\end{align}
where $\m\Psi(\y) = (\ve e^{(1,1)}(y_1), \ldots, \ve e^{(1,k)}(y_1), \ldots, \ve e^{(n,1)}(y_n), \ldots, \ve e^{(n,k)}(y_n)) \in \R^{2n \times nk}$,
$\ve e^{(a,b)} = (0, \ldots, 0, \psi_{ab}'(v_a), \psi_{ab}''(v_a), 0, \ldots, 0)^T \in \R^{2n}$,
such that the non-zero entries are at indices $(2a-1, 2a)$, 
$\ve\upsilon(\y) = (v_{1}^{c,d}(\y), v_{1}^{c}(\y)v_{1}^{d}(\y), \ldots, v_{n}^{c,d}(\y), v_{n}^{c}(\y)v_{n}^{d}(\y))^T \in \R^{2n}$,
$v_i^c = \frac{\partial v_i}{\partial y_c}(\y)$, and $v_i^{c,d} = \frac{\partial^2 v_i}{\partial y_c \partial y_d}(\y)$.
From Lemma~4 and 5 of \citet{Khemakhem2020}, assumption~\ref{GA6} implies that 
$\m\Psi(\y)$ has full row rank $2n$, and thus the pseudo-inverse of $\m\Psi(\y)^T$ fulfils $\m\Psi(\y)^{+T} \m\Psi(\y)^{T} = \m I$.
We multiply the equation above from the left by such pseudo-inverse and obtain
\begin{align}
	\ve\upsilon(\y) = 0.
\end{align}
In particular, $v_{a}^{c}(\y)v_{a}^{d}(\y) = 0$ for all $1 \le a \le n$, $1 \le c \le n$, and $c < d \le 2n$.
This means that a row of $\m J_\ve v \in \R^{n \times 2n}$ at each $\y$ has either
1)~only one non-zero entry somewhere in the former half block (corresponding to the partial derivatives by $\y_1$) 
or 2)~non-zero entries only in the latter half block (corresponding to the partial derivatives by $\y_2$).
The latter case is contradictory because it means that the component $v_i$ is a function of only $\y_2 = \xtm$,
and cannot hold Eq~\ref{eq:gcl_eq_vec_v}, which right-hand side is a function of all components of $\y_1$ (and $\y_2$).
Therefore, $\m J_\ve v$ should have only one non-zero entry in the former half block for each row.
From the results of $\m J_\ve v$ and $\m J_{\tilde{\ve v}_2}$,
we deduce that $\m J_{\tilde{\ve v}}$ is a block diagonal matrix.
Now, by invertibility and continuity of $\m J_{\tilde{\ve v}}$, we deduce that the location of the non-zero entries are fixed
and do not change as a function of $\y$. This proves that $\tilde{\ve v} = \tilde{\h} \circ \tilde{\f}(\y)$ is a block-wise invertible point-wise function,
and $v_{i}$ ($=h_i(\xt, \xtm)$) is represented by only one $y_{1j_i}$ ($=s_{j_i}(t)$) up to a scalar (component-specific) invertible transformation,
and the Theorem is proven.

\subsection{Proof of Theorem~2}
\label{sec:app_tcl}

The conditional joint log-pdf of a data point $(\xt, \xtm)$ is given by, 
using the probability transformation formula,
\begin{align}
        \log p(\xt, \xtm | \ut) &= \log p_{\tilde{\s}}(\tilde{\ve g}(\xt, \xtm) | \ut) + \log | \det \m J \tilde{\ve g}(\xt, \xtm) | \nonumber \\
        &= \log p_\s(\g(\xt, \xtm) | \ut) + \log p_\x(\xtm | \ut) + \log | \det \m J \tilde{\ve g}(\xt, \xtm) | \nonumber \\
        &= \sum_{i=1}^{n} \left[ Q_i(g_i(\xt, \xtm)) - Z_i(\ut) + \sum_{j=1}^k q_{ij} (g_i(\xt, \xtm)) \lambda_{ij}(\ut) \right] \nonumber \\
        & + \log p_\x(\xtm | \ut) + \log | \det \m J \tilde{\ve g}(\xt, \xtm) |  \label{eq:tcl_pxxu}
\end{align}
where $p_{\tilde{\s}}$, $p_\s$, and $p_\x$ are the conditional pdfs of $(\s, \x)$, $\s$, and $\x$, respectively,
$\m J$ denotes the Jacobian, and $s_i = g_{i}(\xt, \xtm)$ by definition.
The second equation is from the structure of the augmented demixing model (Eq.~\ref{eq:ga})
and the temporal independence of $\s$ (assumption~\ref{TA2}), and the last equation is from the conditional exponential family model of the innovation (A\ref{A2}).
On the other hand, by applying Bayes rule on the optimal discrimination relation given by Eq.~\ref{eq:tcl_r},
after dividing all the exponential term by the one of $\tau=1$ to avoid the well-known indeterminacy of the softmax function,
\begin{align}
     \log p(\xt, \xtm | \ut=\t) &= \sum_{i=1}^n \sum_{j=1}^k (w_{ij\t} - w_{ij1}) \psi_{ij}(h_i(\xt, \xtm)) + \phi(\xtm, \ut=\t) \nonumber \\
     &-  \phi(\xtm, \ut=1) + \log p(\xt, \xtm | \ut = 1) + \alpha_\t,  \label{eq:tcl_r_bayes}
\end{align}
where $\alpha_\t = b_\t - b_1 - \log p(\ut = \t) + \log p(\ut = 1)$.
Substituting Eq.~\ref{eq:tcl_pxxu} with $\ut = 1$ into Eq.~\ref{eq:tcl_r_bayes}, we have;
 \begin{align}
     \log p(\xt, \xtm | \ut=\t) &= \sum_{i=1}^{n}\sum_{j=1}^k \left[ (w_{ij\t} - w_{ij1}) \psi_{ij}(h_i(\xt, \xtm)) + q_{ij} (g_i(\xt, \xtm)) \lambda_{ij}(\ut=1) \right]  \nonumber \\
     &+ \sum_{i=1}^{n} \left[ Q_i(g_i(\xt, \xtm)) - Z_i(\ut=1) \right] + \phi(\xtm, \ut=\t) -  \phi(\xtm, \ut=1)  \nonumber \\
     &+ \log p_\x(\xtm | \ut=1) + \log | \det \m J \tilde{\ve g}(\xt, \xtm) | + \alpha_\t \label{eq:tcl_pxxu2}
\end{align}
Setting Eq.~\ref{eq:tcl_pxxu2} and Eq.~\ref{eq:tcl_pxxu} with $\ut = \t$ to be equal for arbitrary $\t$, we have:
 \begin{align}
     &\sum_{i=1}^{n} \sum_{j=1}^{k} \left(w_{ij\t} - w_{ij1}\right) \psi_{ij}(h_i(\xt, \xtm)) + \phi(\xtm, \ut=\t) - \phi(\xtm, \ut=1)+ \alpha_\t \nonumber \\
     &=\sum_{i=1}^{n} \sum_{j=1}^{k} \left(\lambda_{ij}(\ut=\t) - \lambda_{ij}(\ut=1)\right) q_{ij} (g_i(\xt, \xtm)) + \log p_\x(\xtm | \ut=\t) - \log p_\x(\xtm | \ut=1) + z_\t \label{eq:tcl_eq}
\end{align}
where $z_\t = \sum_{i=1}^{n} Z_i(\ut=1) - Z_i(\ut=\t)$.
By collecting this equation for all the $T$ labels into rows, except $\t=1$, which makes both-sides zero;
\begin{align}
	\m W^T \ve\psi(\ve h(\xt, \xtm)) + \ve\phi(\xtm) + \ve\alpha
	= \m L^T \ve q(\st) +  \ve p(\xtm) + \ve z, \label{eq:tcl_eq_vec}
\end{align}
where $\m W \in \R^{nk \times (T-1)}$ is a matrix of $w_{ij\t} - w_{ij1}$, with the product of $i, j$ giving row index and $\t$ column index,
$\m L$ is a matrix of $\lambda_{ij}(\ut=\t) - \lambda_{ij}(\ut = 1)$ given in the assumption~\ref{TA4} in the Theorem,
$\ve\psi(\ve h(\xt, \xtm)) = (\psi_{11}(h_{1}(\xt, \xtm)), \ldots, \psi_{nk}(h_{n}(\xt, \xtm)))^T$,
$\ve q(\st) = (q_{11}(s_1(t)), \ldots, q_{nk}(s_n(t)))^T$,
$\ve\phi(\xtm) =  (\phi(\xtm, \ut=2), \ldots, \phi(\xtm, \ut=T))^T - \ve 1 \phi(\xtm, \ut=1)$,
$\ve 1$ is a $(T-1) \times 1$ vector of ones,
$\ve\alpha = (\alpha_2, \ldots, \alpha_T)^T$,
$\ve p(\xtm) = (\log p_\x(\xtm | \ut=2), \ldots, \log p_\x(\xtm | \ut=T))^T - \ve 1 \log p_\x(\xtm | \ut=1)$,
and $\ve z = (z_2, \ldots, z_T)^T$.
Let a compound demixing-mixing function $\ve v(\st, \xtm) = \h \circ \tilde{\f}(\st, \xtm)$, 
and change variables to $\y = [\y_1, \y_2] = [\st, \xtm]$, we then have
\begin{align}
	\m W^T \ve\psi(\ve v(\y)) + \ve\phi(\y_2) + \ve\alpha
	= \m L^T \ve q(\y_1) +  \ve p(\y_2) + \ve z. \label{eq:tcl_eq_vec_v}
\end{align}

Firstly, we will show that $\m W$ has full row rank $nk$.
From the definition of $\ve q(\y_1)$, its partial derivative with respect to $y_{1i}$
is $\ve q'(y_{1i}) = (0, \ldots, 0, q_{i1}'(y_{1i}), \ldots ,q_{ik}'(y_{1i}), 0, \ldots, 0)^T$.
According to Lemma~3 of \citet{Khemakhem2020}, for $y_{1i}$ which satisfies A\ref{A2},
there exist $k$ points $(\bar{y}_{1i}^1, \ldots, \bar{y}_{1i}^k)$ 
such that $(\ve q'(\bar{y}_{1i}^1), \ldots, \ve q'(\bar{y}_{1i}^k))$ are linearly independent.
By differentiating Eq.~\ref{eq:tcl_eq_vec_v} with respect to $y_{1i}$ 
and collecting their evaluations at such $k$ distinctive points for all $i$, we get
\begin{align}
	\m W^T \tilde{\m Q} = \m L^T \m Q,
\end{align}
where $\m Q  \in \R^{nk \times nk}$ is a matrix collecting $\ve q'(\bar{y}_{1i}^l)$ to the columns indexed by $(i, l)$,
and $\tilde{\m Q}$ is a collection of partial derivatives of $\ve\psi(\ve v(\y))$ evaluated at the same points.
$\m Q$ is invertible (through a combination of Lemma~3 of \citet{Khemakhem2020} and the fact that each component of $\ve q$ is univariate), 
and thus the right-hand side has full column rank $nk$ because $\m L$ has full row rank $nk$ (assumption~\ref{TA4}).
The full column rank of the right-hand side implies the full row rank of $\m W$ and the invertibility of $\tilde{\m Q}$.

Now, let an augmented compound demixing-mixing function $\tilde{\ve v}(\y) = [\tilde{\ve v}_1(\y), \tilde{\ve v}_2(\y)] = \tilde{\h} \circ \tilde{\f}(\y)$,
where $\tilde{\h}$ is the augmented function defined in the assumption~\ref{TA6} in the Theorem.
The $\tilde{\ve v}_1(\y)$ corresponds to $\ve v(\y)$ defined above.
Note that $\tilde{\ve v}$ is invertible because both $\tilde{\h}$ and $\tilde{\f}$ are invertible.
What we need to prove is that $\tilde{\ve v}$ is a block-wise invertible point-wise function,
in the sense that $\tilde{v}_{1i}$ is a function of only one $y_{1j_i}$ and not of any of $y_{2j_i}$, and vise versa.
This can be done by showing that the product of any two distinct partial derivatives
of any component is always zero, and the Jacobian $\m J_{\tilde{\ve v}} \in \R^{2n \times 2n}$ is block diagonal;
the upper and lower block correspond to $\y_1$ and $\y_2$ respectively.
Along with invertibility, this means that each component depends exactly on one variable of the corresponding block ($\y_1$ or $\y_2$).
Below, we show that separately for $\m J_{\ve v} \in \R^{n \times 2n}$ and $\m J_{\tilde{\ve v}_2} \in \R^{n \times 2n}$.
Firstly, this is obviously true for $\m J_{\tilde{\ve v}_2}$ because $\tilde{\ve v}_2(\y)$ is just an identity mapping of
$\y_2$ from the definitions of $\tilde{\h}$ and $\tilde{\f}$, and does not depend on $\y_1$;
the lower non-zero block of $\m J_{\tilde{\ve v}}$ is an identity matrix.
Next, we will show that for $\m J_{\ve v}$.
We differentiate Eq.~\ref{eq:tcl_eq_vec_v} with respect to $y_{c}, 1 \le c \le n$ (an element of $\y_1=\st$), 
and $y_{d}, c < d \le 2n$, and get
\begin{align}
	\m W^T \frac{\partial^2}{\partial y_c \partial y_d} \ve\psi(\ve v(\y)) = 0.
\end{align}
From the full row rank of $\m W$ and the calculation of differentials, we get
\begin{align}
	\frac{\partial^2}{\partial y_c \partial y_d} \ve\psi(\ve v(\y)) = \m\Psi(\y)^T \ve\upsilon(\y) = 0,
\end{align}
where $\m\Psi(\y) = (\ve e^{(1,1)}(y_1), \ldots, \ve e^{(1,k)}(y_1), \ldots, \ve e^{(n,1)}(y_n), \ldots, \ve e^{(n,k)}(y_n)) \in \R^{2n \times nk}$,
$\ve e^{(a,b)} = (0, \ldots, 0, \psi_{ab}'(v_a), \psi_{ab}''(v_a), 0, \ldots, 0)^T \in \R^{2n}$,
such that the non-zero entries are at indices $(2a-1, 2a)$, 
$\ve\upsilon(\y) = (v_{1}^{c,d}(\y), v_{1}^{c}(\y)v_{1}^{d}(\y), \ldots, v_{n}^{c,d}(\y), v_{n}^{c}(\y)v_{n}^{d}(\y))^T \in \R^{2n}$,
$v_i^c = \frac{\partial v_i}{\partial y_c}(\y)$, and $v_i^{c,d} = \frac{\partial^2 v_i}{\partial y_c \partial y_d}(\y)$.
From Lemma~4 and 5 of \citet{Khemakhem2020}, assumption~\ref{TA7} implies that 
$\m\Psi(\y)$ has full row rank $2n$, and thus the pseudo-inverse of $\m\Psi(\y)^T$ fulfils $\m\Psi(\y)^{+T} \m\Psi(\y)^{T} = \m I$.
We multiply the equation above from the left by such pseudo-inverse and obtain
\begin{align}
	\ve\upsilon(\y) = 0.
\end{align}
In particular, $v_{a}^{c}(\y)v_{a}^{d}(\y) = 0$ for all $1 \le a \le n$, $1 \le c \le n$, and $c < d \le 2n$.
This means that a row of $\m J_\ve v \in \R^{n \times 2n}$ at each $\y$ has either
1)~only one non-zero entry somewhere in the former half block (corresponding to the partial derivatives by $\y_1$) 
or 2)~non-zero entries only in the latter half block (corresponding to the partial derivatives by $\y_2$).
The latter case is contradictory because it means that the component $v_i$ is a function of only $\y_2 = \xtm$,
and cannot hold Eq~\ref{eq:tcl_eq_vec_v}, which right-hand side is a function of all components of $\y_1$ (and $\y_2$).
Therefore, $\m J_\ve v$ should have only one non-zero entry in the former half block for each row.
From the results of $\m J_\ve v$ and $\m J_{\tilde{\ve v}_2}$,
we deduce that $\m J_{\tilde{\ve v}}$ is a block diagonal matrix.
Now, by invertibility and continuity of $\m J_{\tilde{\ve v}}$, we deduce that the location of the non-zero entries are fixed
and do not change as a function of $\y$. This proves that $\tilde{\ve v} = \tilde{\h} \circ \tilde{\f}(\y)$ is a block-wise invertible point-wise function,
and $v_{i}$ ($=h_i(\xt, \xtm)$) is represented by only one $y_{1j_i}$ ($=s_{j_i}(t)$) up to a scalar (component-specific) invertible transformation,
and the Theorem is proven.

\subsection{Discussion on the identifiability of IIA-HMM}
\label{sec:app_hmm}

We obtain the following Theorem on identifiability of IIA-HMM.
\begin{Theorem}
Assume the following:\vspace*{-2mm}
\begin{enumerate} 
\item We obtain observations from an NVAR model (Eq.~\ref{eq:f}), whose augmented model (Eq.~\ref{eq:fa}) is invertible and sufficiently smooth.\label{HA1}
\item The latent innovations of the process follow the assumption A\ref{A2} with $k \ge 2$, and the sufficient statistics $q_{ij}$ are twice differentiable.\label{HA2}
\item The $\ve u$ are unobserved (in contrast to the previous frameworks), and follow A\ref{A3}, where the transition matrix $\m A$ has full rank with non-zero diagonal entries, and induces irreducible stationary Markov chain with a unique stationary state distribution $\ve \pi$.\label{HA3}
\item The conditional distributions $p(\cdot | \xtm, \ut), \ut = 1, \ldots, C$ are all generically distinct for any $\xtm$, meaning that the set of points for which this doesn't hold is measure zero. \label{HA4}
\item The modulation matrix of size $nk \times (C-1)$\label{HA5}
	\begin{align}
		\m L = (\ve\lambda(2) - \ve\lambda(1), \ldots, \ve\lambda(T) - \ve\lambda(1))
	\end{align}
	has full row rank $nk$, where $\ve\lambda(c) = (\lambda_{11}(\ve u=c), \ldots, \lambda_{nk}(\ve u=c))^T \in \R^{nk}$.
\item We estimate the transition matrix, parameters of the innovation model, latent state at each data point, and demixing model $\h(\xt, \xtm): \R^{2n} \rightarrow \R^{n}$
	with universal approximation capability, by maximizing the likelihood of the observations.\label{HA6}
\item The augmented function $\tilde{\h}(\xt, \xtm) = [\h(\xt, \xtm), \xtm]: \R^{2n} \rightarrow \R^{2n}$ is invertible.\label{HA7}
\end{enumerate}\vspace*{-2mm}
Then, in the limit of infinite data, $\h$ provides a consistent estimator of the IIA model:
The functions $h_i(\xt, \xtm)$ give the independent innovations, up to permutation and scalar (component-wise) invertible transformations.
\end{Theorem}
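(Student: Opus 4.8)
\emph{Step 1 (identify the HMM).} The plan is to peel off the hidden-Markov structure so as to recover the conditional innovation distributions up to a relabeling of the latent states, and then replay the exponential-family / variability argument used in the proof of Theorem~2 (Supplementary~\ref{sec:app_tcl}). Under Assumption~\ref{HA6}, in the infinite-data limit maximum likelihood matches the fitted model to the true law of the observed trajectory $\x_0,\ldots,\x_T$, which is the law of an autoregressive finite-state HMM: the hidden chain $\ut$ (Assumption~\ref{HA3}) emits $\xt$ with conditional density $p(\xt\mid\xtm,\ut)=p_\s(\ve g(\xt,\xtm)\mid\ut)\,|\det\m J\tilde{\g}(\xt,\xtm)|$ (cf.\ Eq.~\ref{eq:tcl_pxxu}), the Jacobian factor being free of $\ut$ since $\tilde{\g}$ has no $\ut$ argument. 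Assumption~\ref{HA3} supplies a full-rank transition matrix inducing an irreducible chain with a unique stationary law, and Assumption~\ref{HA4} supplies generically distinct emission kernels; the nonparametric HMM identifiability theorem behind \citet{Halva2020}, in its autoregressive (conditional-on-$\xtm$) form, then yields a permutation $\sigma$ of $\{1,\ldots,C\}$ such that the fitted emission kernel of state $c$ equals the true emission kernel of state $\sigma(c)$, for almost every $(\xt,\xtm)$.

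\emph{Step 2 (reduce to the Theorem~2 equation).} Passing to logarithms, applying the change of variables on both sides, and inserting the conditional exponential-family form of A\ref{A2} (with the family, i.e.\ the $Q_i$ and $q_{ij}$, held fixed as is standard in HMM estimation), the dependence on the state enters each side only through $\sum_{i,j}q_{ij}(h_i(\xt,\xtm))\,\hat\lambda_{ij}(c)$ and $\sum_{i,j}q_{ij}(g_i(\xt,\xtm))\,\lambda_{ij}(\sigma(c))$ respectively, where $\h,\hat\lambda$ are the estimated demixing and modulations. Subtracting the identity for a reference state kills the state-independent pieces (both Jacobians, both base measures $Q_i$), and after the substitution $\y=[\y_1,\y_2]=[\st,\xtm]$, $\ve v(\y)=\h\circ\tilde{\f}(\y)$ --- under which $g_i(\xt,\xtm)=y_{1i}$ --- one arrives at exactly the analogue of Eq.~\ref{eq:tcl_eq_vec_v},
\begin{align}
\hat{\m L}^T\,\ve q(\ve v(\y)) + \hat{\ve\alpha} \;=\; \m L_\sigma^T\,\ve q(\y_1) + \ve\alpha ,
\end{align}
with $\hat{\m L}\in\R^{nk\times(C-1)}$ the matrix of estimated modulation differences, $\m L_\sigma$ the true modulation matrix of Assumption~\ref{HA5} with its columns reindexed by $\sigma$ (still of full row rank $nk$), $\hat{\ve\alpha},\ve\alpha$ constant vectors, $\ve q(\y_1)=(q_{11}(y_{11}),\ldots,q_{nk}(y_{1n}))^T$ and $\ve q(\ve v(\y))$ the same stack evaluated at the $v_i$. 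The point is that $\hat{\m L}$ now plays the role that the free matrix $\m W$ played in Theorem~2, while the \emph{fixed} sufficient statistics $q_{ij}$ play the role of the free functions $\psi_{ij}$; the strong-exponential condition built into A\ref{A2} furnishes exactly what Assumption~\ref{TA7} furnished there.

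\emph{Step 3 (conclude) and the main obstacle.} From here the argument is literally that of Supplementary~\ref{sec:app_tcl}: differentiating in $y_{1i}$ and applying Lemma~3 of \citet{Khemakhem2020} gives $\hat{\m L}^T\tilde{\m Q}=\m L_\sigma^T\m Q$ with $\m Q$ invertible, so $\hat{\m L}$ has full row rank $nk$; differentiating next in $y_c$ ($1\le c\le n$) and $y_d$ ($c<d\le 2n$) annihilates the right-hand side, and left-multiplying by a left inverse of $\hat{\m L}^T$ gives $\m\Psi(\y)^T\ve\upsilon(\y)=0$; Lemmas~4--5 of \citet{Khemakhem2020} make $\m\Psi(\y)$ full row rank $2n$, hence $\ve\upsilon(\y)=0$ and in particular $v_a^c(\y)\,v_a^d(\y)=0$ for all such $a,c,d$. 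With the identity block of $\m J_{\tilde{\ve v}_2}$ and the invertibility of $\tilde{\ve v}=\tilde{\h}\circ\tilde{\f}$ (Assumptions~\ref{HA1} and~\ref{HA7}), each $v_a$ must then depend on a single coordinate of $\y_1$, and continuity of the Jacobian fixes the index pattern globally; therefore $h_i(\xt,\xtm)$ is an invertible component-wise function of one $s_{j_i}(t)$, up to permutation. The genuinely new --- and delicate --- step is Step~1: it requires the nonparametric identifiability of \emph{autoregressive} finite-state HMMs (recovery of the transition matrix and of the conditional emission kernels up to a common state permutation) from Assumptions~\ref{HA3}--\ref{HA4}, extending the i.i.d.-emission result used by \citet{Halva2020}; dealing carefully with the conditioning on $\xtm$ and with the measure-zero exceptional set is where the real work lies, after which Steps~2--3 are a routine transcription of the argument of Theorems~1 and~2.
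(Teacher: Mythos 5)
Your high-level plan coincides with the paper's: (i) use nonparametric identifiability of the hidden-Markov structure to recover the state-conditional emission laws up to a common permutation of labels, (ii) cancel the state-independent terms (Jacobians, base measures, the $\log p_\x(\xtm\mid\ve u)$ factor) by pivoting on a reference state, and (iii) rerun the exponential-family/rank argument of Theorems~1 and~2. Your Steps~2--3 are a faithful, in fact more explicit, transcription of that machinery (the paper itself compresses this into ``the remainder follows as in \citet{Halva2020}''), and your observation that the fixed sufficient statistics $q_{ij}$ now play the role of the $\psi_{ij}$, with the strong-exponentiality condition of A\ref{A2} replacing Assumption~\ref{TA7}, is exactly right.

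The genuine gap is Step~1, and you have correctly located it but not closed it. The sentence ``the nonparametric HMM identifiability theorem behind \citet{Halva2020}, in its autoregressive (conditional-on-$\xtm$) form, then yields a permutation $\sigma$\dots'' is not a proof: the existing result applies to emissions that depend on the hidden state alone, whereas here the emission kernel $F_{U_t,X_{t-1}}(X_t)$ also conditions on the previous observation, which breaks the conditional-independence structure that the i.i.d.-emission argument exploits. This extension is precisely the content of the paper's Lemma~\ref{lmhh:main_lemma} (with Lemmas~\ref{lm1} and \ref{lm2}): one factorizes the joint law of $2T+1$ observations around the central time point into a three-way array $\langle \tilde{\m M}_1,\m M_2,\m M_3\rangle$ whose factors are forward and backward conditional distribution functions $G_T$ and $H_T$ evaluated at suitably chosen points sharing a common central value $\bar{\x}$; one must then prove that these matrices attain full (Kruskal) rank --- which requires a nontrivial induction over sequence length (Lemma~\ref{lm2}), a conic-hull argument (Lemma~\ref{lm1}), and a limiting device $\m D_{\x_{t-1}}(\x_t)\to I$ --- before Kruskal's three-way uniqueness theorem \citep{kruskal77supp} can be invoked, following \citet{Alexandrovich2016supp} with the modifications needed for the autoregressive emissions. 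One further consequence you elide: Assumption~\ref{HA4} only gives \emph{generic} distinctness conditional on $\xtm$, and the Kruskal construction is what converts this into identification of the two-step joint $p(\x_T,\x_{T+1}\mid\ve u_{T+1})$ and of the marginal $p(\x_T\mid\ve u_{T+1})$, both of which are needed before your cancellation in Step~2 is legitimate. Without supplying this lemma (or an equivalent), the proof is incomplete at its most novel point.
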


\begin{proof}
	Assume equality of joint-data distributions for $2T+1$ observations from the IIA-HMM model with two different sets of parameters $\ve\theta, \hat{\ve\theta}$:
	$$p(\x_1, \dots, \x_T,\dots,\x_{2T+1}; \ve\theta)=p(\x_1, \dots, \x_T,\dots,\x_{2T+1}; \hat{\ve\theta})$$
	With the Assumptions~\ref{HA1}, \ref{HA3}, and \ref{HA4}, we can apply Lemma~\ref{lmhh:main_lemma} below and identify the following: 
	\begin{align}
		\ve A &= \hat{\ve A} \\
		\ve \pi &= \hat{\ve \pi} \\
		p(\x_t|\ut=c;\ve\theta)&=p(\x_t|\ut=\sigma(c);\hat{\ve\theta})\label{eqhh:marginal} \\
		p(\x_1,\dots,\x_T|\x_{T+1}, \ve u_{T+1}=c; \ve\theta)&=p(\x_1,\dots,\x_T|\x_{T+1}, \ve u_{T+1}=\sigma(c); \hat{\ve\theta}) \label{eqhh:H}.
	\end{align}
	where $\sigma(\cdot)$ accounts for permutation of labels. For rest of the proof, without loss of generality, we assume label ordering matches.
	Since joint distributions identify their marginals uniquely, equation \eqref{eqhh:H} implies
	\begin{align}
		p(\x_T| \x_{T+1}, \ve u_{T+1};\ve\theta) &=p(\x_T|\x_{T+1}, \ve u_{T+1}; \hat{\ve\theta}) \\
		\implies \frac{p(\x_T, \x_{T+1}| \ve u_{T+1};\ve\theta)}{p(\x_{T+1}| \ve u_{T+1};\ve\theta)} &=\frac{p(\x_T, \x_{T+1}| \ve u_{T+1};\hat{\ve\theta})}{p(\x_{T+1}| \ve u_{T+1};\hat{\ve\theta})}.
	\end{align}
	This, with \eqref{eqhh:marginal}, implies that the following is identified:
	\begin{align}
		p(\x_T, \x_{T+1}| \ve u_{T+1};\ve\theta)=p(\x_T, \x_{T+1}| \ve u_{T+1};\hat{\ve\theta}) \label{eqhh:main_identif}
	\end{align}
	Finally, notice that 
	\begin{align}
		\sum_{k=1}^C p(\x_T|\ve u_{T}=k; \ve \theta)p(\ve u_{T}=k|\ve u_{T+1}; \ve A)&=\sum_{k=1}^C p(\x_T|\ve u_{T}=k; \hat{ \ve\theta})p(\ve u_{T}=k|\ve u_{T+1};\hat{\ve A}) \nonumber \\
		\implies p(\x_T| \ve u_{T+1};\ve\theta)&=p(\x_T,| \ve u_{T+1};\hat{\ve\theta}) \label{eqhh:identif_second}
	\end{align}
	Writing out the log-likelihoods in equation \eqref{eqhh:main_identif}, we get:
	\begin{align}
		&\log p_{\tilde{\s}}(\tilde{\ve g}(\x_{T}, \x_{T+1}) | \ve u_{T+1}) + \log | \det \m J \tilde{\ve g}(\x_{T}, \x_{T+1})| =  \log \hat{p}_{\tilde{\s}}(\hat{\tilde{\ve g}}(\x_{T}, \x_{T+1}) | \ve u_{T+1}) + \log | \det \m J \hat{\tilde{\ve g}}(\x_{T}, \x_{T+1})| \nonumber \\
		\implies & \log p_\s(\g(\x_{T}, \x_{T+1}) | \ve u_{T+1}) + \log p_\x(\x_{T} | \ve u_{T+1}) + \log | \det \m J \tilde{\ve g}(\x_{T}, \x_{T+1})| \nonumber \\
		&= \log \hat{p}_\s(\hat{\g}(\x_{T}, \x_{T+1}) | \ve u_{T+1}) + \log \hat{p}_\x(\x_T | \ve u_{T+1}) + \log | \det \m J \hat{\tilde{\ve g}}(\x_{T}, \x_{T+1})|, \nonumber
	\end{align}
	where $p_{\tilde{\s}}$, $p_\s$, and $p_\x$ are the conditional pdfs of $(\s, \x)$, $\s$, and $\x$, respectively, and $\m J$ denotes the Jacobian.
	Using the result in \eqref{eqhh:identif_second}, gives us
	\begin{align}
		&\log p_\s(\g(\x_{T}, \x_{T+1}) | \ve u_{T+1}) + \log | \det \m J \tilde{\ve g}(\x_{T}, \x_{T+1})| = 	\log \hat{p}_\s(\hat{\g}(\x_{T}, \x_{T+1}) | \ve u_{T+1}) +  \log | \det \m J \hat{\tilde{\ve g}}(\x_{T}, \x_{T+1})|. \nonumber
	\end{align}
	Remainder of the proof follows as in \citet{Halva2020} and is not shown here for brevity. The general idea is to take the above equation for different values of $\ve u_{T+1}$ and use one of them as a `pivot' in order to get rid of the Jacobians. Finally, the exponential family distribution properties, as done in the earlier proofs of this paper, are used to show identifiability.
\end{proof}

\subsubsection{Lemmas}

\textbf{Set-up}: These Lemmas follow, in general, those of \citet{Alexandrovich2016supp} but with substantial modifications made to accomodate our model. We first define some relevant notation. Let $(X_t)_{t\in \mathbb{N}}$ denote the observed process and $(U_t)_{t\in\mathbb{N}}$ the discrete latent first-order Markov chain. Assume these processes are time-homogeneous. $K$ is the cardinality of the state-space of $U_t$, that is, the number of latent states. The first-order Markov chain for $U_t$ is governed by transition matrix $\ve A=(\alpha_{j,k})_{j,k=1,\dots,K}.$ Define $\set S \subset \R^q$ as any subset of Euclidean space. Suppose that $X_t$ takes values in $\set S$, and its distribution depends on its most recent past $X_{t-1}$ and the current latent state $U_t$ -- this distribution function is denoted by $F_{U_t,X_{t-1}}(X_t)$ and is time-homogeneous. Notice that subsequently, $X_t$ is independent of all $X_{t+s}$ for $|s| \ge 2$ given $X_{t-1}, X_{t+1}$ and $U_t$. $F_{U_t}(X_t)$ is used to denote the conditonal distribution of $X_t$ on $U_t$ alone; that is, all other variables have been integrated out. 
$\pi = (\pi_1, \ldots, \pi_K)$ denotes a stationary distribution of $\m A$. In the following proofs, $\mathbf{x}_t$ is not a random variable, but represents a point in $\set S$.

Let $\dim(V)$ denote the dimension of vector space $V$. For $\ve v_1, \dots, \ve v_n \in V$ let $\vspan \{\ve v_1,\dots,\ve v_n\}$ denote the subspace of $V$ spanned by $\ve v_1,\dots,\ve v_n$. For scalars $x_1, \dots, x_n \in \R$ let $\diag(x_1,\dots, x_n)$ denote $n$-dimensional diagonal matrix with $x_1, \dots, x_n$ along the diagonal. $\mathbf{1}_K$ is a $K$-dimensional vector of ones. Let $\m M_i \in \R^{K \times n_i} \, (n_i \in \mathbb{N}; i=1, 2, 3)$, then $[\m M_i, \m M_j]$ denotes the $K \times (n_i + n_j)$ matrix made by joining the two matrices at their columns. $(\m M)_{m,n}$ denotes the element of matrix $\m M$ on the $m$-th row and $n$-th column. Finally, let's define three-way arrays, indexed by $(i_1, i_2, i_3)$, where the corresponding element is given by:
\begin{equation}\label{3way}
	\langle \m M_1, \m M_2, \m M_3 \rangle_{(i_1, i_2, i_3)} = \sum_{k=1}^K (\m M_1)_{k, i_1} (\m M_2)_{k, i_2} (\m M_3)_{k, i_3} \, \,\,(i_j=1,\dots,n_j)
\end{equation}
Define kruskal rank $R_{\kappa}(\m M)$ as the maximal $j$ such that any selection of $j$ rows of $\m M$ are linearly independent. Theorem 4a of \citet{kruskal77supp} states that if:
\begin{equation}\label{eq:kruskal}
	R_{\kappa}(\m M_1) + R_{\kappa}(\m M_2) + R_{\kappa}(\m M_3) \ge 2K+2
\end{equation}
and $$ \langle \m M_1, \m M_2, \m M_3 \rangle = \langle \m N_1, \m N_2, \m N_3 \rangle $$
then there exist permutation matrix $\m P$ and diagonal matrices $\ve \Lambda_i$, such that $\ve \Lambda_1 \ve \Lambda_2 \ve \Lambda_3 = I_K$ and $\m N_i = \m \Lambda_i \m P \m M_i$.

\begin{Lemma} \label{lmhh:main_lemma}
	Assume that:
	\begin{enumerate}
		\item The latent state transition matrix $\m A$ has full rank and is ergodic.
		\item The conditional emission distributions $F_{u_t, \x_{t-1}}(\x_t)$ for $k=1,\dots,K$ are \textit{generically} distinct for each given $\x_{t-1}$. That is, the set of points $(\x_{t-1}, \x_t)$ for which this doesn't hold is measure zero. \label{as:generic}
		\item First-order Markov chain $(U_t)$ is stationary with starting distribution $\pi$, which is thus the stationary distribution of $\m A$
	\end{enumerate}
	then the marginal emission distributions $F_{u_t}(\x_t)$, the transition matrix $\m A$, the initial state probabilities $\pi$ are all identified from the joint distribution of the observation process $(X_1,\dots,X_{2T+1})$ where $T \ge K-1$, up to label swapping.
\end{Lemma}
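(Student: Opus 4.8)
The plan is to recast a conditional law of the $2T+1$ observations in the three-way form of \eqref{3way}, with the central hidden state $U_{T+1}$ as the summation index $k=1,\dots,K$, and then to invoke Theorem~4a of \citet{kruskal77supp}. The first step is to build that array. Because of the autoregressive edges $X_{t-1}\to X_t$, conditioning on a hidden state alone does not decouple the observation sequence into independent blocks (already $X_T$ and $X_{T+1}$ are dependent given $U_{T+1}$), so, following \citet{Alexandrovich2016supp}, I would additionally condition on the two observations $\x_T,\x_{T+2}$ flanking the centre. A d-separation argument on the model graph then shows that, given $(\x_T,\x_{T+2},U_{T+1})$, the three pieces $\set A=(X_1,\dots,X_{T-1})$, $\set B=X_{T+1}$ and $\set C=(X_{T+3},\dots,X_{2T+1})$ are mutually independent, that $\set A$ is further conditionally independent of $\x_{T+2}$, and $\set C$ of $\x_T$. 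Writing $w_u=p(U_{T+1}=u\mid\x_T,\x_{T+2})$, $p^{\set A}_u=p(\set A\mid U_{T+1}=u,\x_T)$, $p^{\set B}_u=p(\set B\mid U_{T+1}=u,\x_T,\x_{T+2})$, $p^{\set C}_u=p(\set C\mid U_{T+1}=u,\x_{T+2})$, this gives
\[
	p(\set A,\set B,\set C\mid\x_T,\x_{T+2})=\sum_{u=1}^{K} w_u\,p^{\set A}_u\,p^{\set B}_u\,p^{\set C}_u .
\]
Evaluating $\set A,\set B,\set C$ on finite grids of test points and folding the positive weights $w_u$ into the first factor turns these conditional laws into $K$-row matrices $\m M_1,\m M_2,\m M_3$ for which the right-hand side is exactly $\langle\m M_1,\m M_2,\m M_3\rangle$.

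The second step is to check Kruskal's condition \eqref{eq:kruskal}, which I would satisfy with the allocation $R_\kappa(\m M_1)=R_\kappa(\m M_3)=K$ and $R_\kappa(\m M_2)\ge 2$, for a total of $\ge 2K+2$. For the side blocks: since $\m A$ is of full rank and ergodic, the time-reversed transition kernel has $K$ linearly independent rows, and propagating this through the $T$ observations effectively carried by a side block (its $T-1$ observations together with the conditioned bridge observation) — using the generic distinctness of the emission kernels (Assumption~\ref{as:generic}) at the intermediate steps, and the hypothesis $T\ge K-1$ — forces the $K$ conditionals $p^{\set A}_u$, and likewise $p^{\set C}_u$, to be linearly independent, hence their grid matrices to have rank $K$ on a suitably chosen grid. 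For $\m M_2$: generic distinctness (again Assumption~\ref{as:generic}) implies, for generic bridge values, that no two of the densities $p^{\set B}_u$ are proportional (two proportional densities each integrating to one coincide), so $R_\kappa(\m M_2)\ge 2$.

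Kruskal's theorem then yields, for any two parameter sets producing the same joint law, a permutation matrix $\m P$ and diagonal matrices $\ve\Lambda_i$ with $\ve\Lambda_1\ve\Lambda_2\ve\Lambda_3=\m I$ relating the two versions of each $\m M_i$; that is, the $\m M_i$ are identified up to a common column permutation and a triple of diagonal rescalings, and the rescalings are forced to be identities by normalisation (columns of the true $\m M_2,\m M_3$ are probability densities). Hence $w_u$ and the three block-conditionals are identified up to a single label permutation $\sigma$. Integrating $p^{\set B}_u$ against the conditional law of $(\x_T,\x_{T+2})$ given $U_{T+1}=u$ (recovered from $w_u$, the data marginal of $(\x_T,\x_{T+2})$ and stationarity) yields the marginal emissions $F_u(\x)=p(\x\mid U=u)$, and integrating $w_u$ against the data marginal of $(\x_T,\x_{T+2})$ yields $\pi_u=p(U_{T+1}=u)$, both up to $\sigma$; finally $\m A$ is extracted, as in \citet{Alexandrovich2016supp}, from the way the identified side-block conditionals of consecutive lengths decompose through it, using its full rank. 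This argument uses $2T+1$ observations with $T\ge K-1$, as claimed.

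The step I expect to be the main obstacle is the rank verification of the second step: it is the only place where all three structural hypotheses — full-rank ergodic $\m A$, generic distinctness of the emission kernels, and $T\ge K-1$ — must be combined, and it is made delicate both by the autoregressive coupling, which dictates exactly which bridge observations must be conditioned on (and hence the precise book-keeping behind the bound $T\ge K-1$), and by the observations being continuous, so that one works with generically chosen finite grids rather than literal finite matrices, essentially as in \citet{Alexandrovich2016supp}.
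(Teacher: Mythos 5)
Your overall strategy---recast a factorization of the $(2T+1)$-point law as a three-way array over the central hidden state and invoke Kruskal's Theorem 4a---is the same as the paper's, and your remarks about the side-block rank verification being the delicate part are on target (the paper devotes Lemmas~2 and~3 to an induction establishing exactly that). However, there is a genuine gap in your very first step: the conditional independence you assert for the split $\set A=(X_1,\dots,X_{T-1})$, $\set B=X_{T+1}$, $\set C=(X_{T+3},\dots,X_{2T+1})$ given $(\x_T,\x_{T+2},U_{T+1})$ is false in this model. The observation $X_{T+2}$ is a common child of $X_{T+1}$ and $U_{T+2}$, so conditioning on $\x_{T+2}$ \emph{without} conditioning on $U_{T+2}$ opens the path $X_{T+1}\to X_{T+2}\leftarrow U_{T+2}\to X_{T+3}$ (and onward through $U_{T+3},\dots$). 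Concretely, $p(U_{T+2}=j\mid U_{T+1}=i,\x_{T+1},\x_{T+2})\propto A_{ij}\,p(\x_{T+2}\mid \x_{T+1},U_{T+2}=j)$ depends on $\x_{T+1}$, so the posterior over $U_{T+2}$, and hence the law of $\set C$, still depends on $\set B=X_{T+1}$; by the symmetric argument through $U_T$ and the collider $X_T$, the past block is likewise not decoupled, and your claim that $\set C$ is independent of $\x_T$ fails for the same reason. Your proposed three-way array therefore does not equal the conditional law you write down, and Kruskal's theorem cannot be applied to it.

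The paper's resolution is to condition on the \emph{central} observation rather than the flanking ones: given $(U_{T+1}, X_{T+1})$, the blocks $X_{1:T}$ and $X_{T+2:2T+1}$ genuinely are conditionally independent, because the only collider that gets activated is $X_{T+1}$ itself, whose hidden parent $U_{T+1}$ is also in the conditioning set, so no leak through an unconditioned state occurs. The three factors of the Kruskal array are then the past-block conditional $H_T$ (handled via the time-reversed kernel $\tilde{\m A}$ and $\tilde F$), the \emph{marginal} emission $F_k(\bar\x)=\Pr(X_{T+1}\le\bar\x\mid U_{T+1}=k)$ as the rank-$2$ middle factor, and the future-block conditional $G_T$; each side block still uses all $T$ observations (not $T-1$), which is where the bound $T\ge K-1$ enters the induction. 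Your downstream steps (normalization of the scaling matrices via columns of ones, recovery of $\m A$ from nested side-block conditionals of consecutive lengths, $T\ge K-1$) would go through essentially as in the paper once the decomposition is repaired, but as written the proof does not get off the ground.
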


\begin{proof}
\textit{\textbf{Step 1} (factorizing likelihood into blocks by conditional independence)}:
Consider we have $2T+1$ observations from our model. The likelihood of the model can be factored by conditioning on the variables at the central time point $T+1$ as per below:
\begin{align}\label{likelihood}
	\Pr(X_{1:2T+1} \le \x_{1:2T+1})&= \sum_{k} \Pr(X_{1:T} \le \x_{1:T}| U_{T+1}=k, X_{T+1} \le \x_{T+1}) \pi_{k} \\ \nonumber
		      &\times \Pr(X_{T+1} \le \x_{T+1}|U_{T+1}=k) \Pr(X_{T+2:2T+1} \le \x_{T+2:2T+1}|U_{T+1}=k, X_{T+1} \le \x_{T+1}),
\end{align}
where notation $\x_{1:2T+1} = (\x_1,\dots,\x_{2T+1})$ is used, and $\pi_{k}$ represents the stationary distribution $\Pr(U_{T+1} = k)$. Assume $T \ge K-1$. The factorial structure of the likelihood allows us to consider two random variables 
$$V_T = X_{1:T} = (X_1,\dots,X_T) \quad \text{and} \quad W_T=X_{T+2:2T+1}=(X_{T+2},\dots,X_{2T+1}).$$ 
The conditional distribution of $W_T$, evaluated at some $\y_{1:T} \in \set S^T$, given $X_{T+1}=\y_0 \in \set S$ and $U_{T+1}=k$ can be written as:
$$G_T(\y_{0:T};k)=\Pr(W_T \le \y_{1:T}|U_{T+1}=k, X_{T+1} \le \y_0)=\sum_{k_1\cdots k_T}\alpha_{k,k_1}\prod_{t=2}^T\alpha_{k_{t-1},k_t}\prod_{t=1}^T F_{k_t, \y_{t-1}}(\y_t).$$ 
For the conditional likelihood of $V_T$ on $U_{T+1}$ and $X_{T+1}$, we must consider time reversal:
\begin{align*}
	&\tilde{\m A}=(\tilde{\alpha}_{j,k})_{j,k=1,\dots,K}, \, \, \, \, \tilde{\alpha}_{j,k}=\frac{\pi_k \alpha_{k,j}}{\pi_j},\\
	&\tilde{F}_{k, \x}(\x_t)=\Pr(X_t \le \x_t|U_t=k, X_{t+1} \le \x).
\end{align*}
Then for $\y_{T:1}=(\y_{T},\dots,\y_1) \in \set S^{T}$, and given $X_{T+1}=\y_{0} \in \set S$ and $U_{T+1}=k$ can be written as:
\begin{align*}
	H_T(\y_{T:0};k)&=\Pr(V_T \le \y_{T:1}|U_{T+1}=k, X_{T+1} \le \y_{0})\\
			 &=\sum_{k_1\cdots k_T}\tilde{\alpha}_{k,k_1}\prod_{t=2}^{T}\tilde{\alpha}_{k_{t-1},k_{t}}\prod_{t=1}^T\tilde{F}_{k_t, \y_{t-1}}(\y_t).
\end{align*}
Now, take any arbitrary points $\bar{\x} \in \set S$ and $\z_j,\tilde{\z}_j \in \set S^T\text{ for } j=1,\dots,K$. Define $\z_j^+=(\bar{\x},\z_j)$ and $\tilde{\z}_j^+ = (\bar{\x}, \tilde{\z}_j)$. The likelihood in \eqref{likelihood}, at these arbitrary points, for some $j$, can thus be formulated as:
\begin{align}\label{likelihood2}
	\Pr(X_{1:2T+1} \le (\tilde{\z}_j,\bar{\x},\z_j))= \sum_{k} H_T(\tilde{\z}_j^+;k) \pi_{k} F_{k} (\bar{\x})G_T(\z_j^+;k).
\end{align}
Note the correspondence of the above equation to \eqref{3way}. Now consider $K \times K$ matrix:
\begin{equation} \label{G1}
	\m G_1 = (G_T(\z_j^+; k))_{k,j=1,\dots,K}=(G_T((\bar{\x},\z_j); k))_{k,j=1,\dots,K}.
\end{equation}
From Lemma \ref{lm2} below we have that there exist $\z_1,\dots,\z_K \in \set S^T$ such that $\m G_1$ is full rank, for any $\bar{\x}$. Similarly, we form matrix:
\begin{equation} \label{H1}
	\m H_1 = (H_T(\tilde{\z}_j^+; k))_{k,j=1,\dots,K}=(H_T((\bar{\x},\tilde{\z}_j); k))_{k,j=1,\dots,K}.
\end{equation}
Again, by Lemma \ref{lm2}, there exists $\tilde{\z}_1,\dots,\tilde{\z}_K \in \set S^T$ for which $\m H_1$ has full rank.\\

\textit{\textbf{Step 2} (Identifying the distribution as a three-way array):} Now, let $\ve v, \tilde{\ve v} \in \set S^T$ be \textit{any} arbitrary points (\textit{c.f.} \eqref{G1},\eqref{H1} focused on the existence of \textit{some} points). Let $\bar{\x} \in \set S$ also be any point but such that it matches always the one in $\m G_1$ and $\m H_1$. From Assumption~2, we have that $K\times 2$ matrix
\begin{equation}
	\m M_2 = [(F_i(\bar{\x}))_{i=1,\dots,K}, \mathbf{1}_K]
\end{equation}
has Kruskal rank of $2$. From Step 1, the $K \times (K+2)$-matrices
\begin{align}
	&\m M_3 = [\m G_1, (G_T((\bar{\x}, \ve v); k))_{k=1,\dots,K}, \mathbf{1}_K], \quad \m M_1 = [\m H_1, (H_T((\bar{\x}, \tilde{\ve v}); k))_{k=1,\dots,K}, \mathbf{1}_K] \nonumber \\
	&\tilde{\m M}_1 = \diag(\pi)\m M_1,
\end{align}
all have full ranks, $K$, where $\pi_k >0 \,\,\, (k=1,\dots,K)$, and hence the Kruskal rank condition
\begin{equation}
	R_{\kappa}(\tilde{\m M}_1) + R_{\kappa}(\m M_2) + R_{\kappa}(\m M_3) \ge 2K+2
\end{equation}
is satisfied for the three-way array 
\begin{equation}
	\m M^{\star} = \langle \tilde{\m M}_1, \m M_2, \m M_3 \rangle.
\end{equation}
The question now is whether the distribution of $(X_1,\dots,X_{2T+1})$ is alone sufficient to identify $\m M^{\star}$. To see that this is the case, consider the following, exhaustive, possibilities:
\begin{align*}
	&\text{For $i < K+2$; $j=1$; $r < K+2$} \nonumber \\
	& \quad \quad \m M_{(i, 1, r)}^{\star} = \sum_{k=1}^K \pi_k H_T((\bar{\x}, \tilde{\ve v}_i); k) F_k(\bar{\x}) G_T((\bar{\x}, \ve v_r); k) = \Pr(X_{1:T} \le \tilde{\ve v}_i, X_{T+1} \le \bar{\x}, X_{T+2:2T+1} \le \ve v_r) \\
        &\text{For $i=K+2$; $j=1$; $r < K+2$}& \nonumber \\
	& \quad \quad \m M_{(K+2, 1, r)}^{\star} = \sum_{k=1}^K \pi_k F_k(\bar{\x}) G_T((\bar{\x}, \ve v_r); k) = \Pr(X_{T+1} \le \bar{\x}, X_{T+2:2T+1} \le \ve v_r) \\
        &\text{For $i<K+2$; $j=1$; $r = K+2$}& \nonumber \\
	& \quad \quad \m M_{(i, 1, K+2)}^{\star} = \sum_{k=1}^K \pi_k H_T((\bar{\x}, \tilde{\ve v}_i); k) F_k(\bar{\x}) = \Pr(X_{1:T} \le \tilde{\ve v}_i, X_{T+1} \le \bar{\x}) \\
	&\text{For $i=K+2$; $j=1$; $r = K+2$}& \nonumber \\
	& \quad \quad \m M_{(K+2, 1, K+2)}^{\star} = \sum_{k=1}^K \pi_k  F_k(\bar{\x}) = \Pr(X_{T+1} \le \bar{\x}). \\
	&\text{For $i < K+2$; $j=2$; $r < K+2$} \nonumber \\
	& \quad \quad \m M_{(i, 2, r)}^{\star} \,\begin{aligned}[t] &= \sum_{k=1}^K \pi_k H_T((\bar{\x}, \tilde{\ve v}_i); k) G_T((\bar{\x}, \ve v_r); k) = \Pr(X_{1:T} \le \tilde{\ve v}_i, X_{T+2:2T+1} \le \ve v_r| X_{T+1} \le \bar{\x}) \\   &= \frac{\Pr(X_{1:T} \le \tilde{\ve v}_i,  X_{T+1} \le \bar{\x}, X_{T+2:2T+1} \le \ve v_r)}{\Pr(X_{T+1} \le \bar{\x})} \end{aligned}\\
	&\text{For $i = K+2$; $j=2$; $r < K+2$} \nonumber \\
	& \quad \quad \m M_{(K+2, 2, r)}^{\star} = \sum_{k=1}^K \pi_k G_T((\bar{\x}, \ve v_r); k) = \Pr(X_{T+2:2T+1} \le \ve v_r| X_{T+1} \le \bar{\x}) \\ 
	&\text{For $i<K+2$; $j=2$; $r = K+2$}& \nonumber \\
	& \quad \quad \m M_{(i, 2, K+2)}^{\star} = \sum_{k=1}^K \pi_k H_T((\bar{\x}, \tilde{\ve v}_i); k)  = \Pr(X_{1:T} \le \tilde{\ve v}_i | X_{T+1} \le \bar{\x}) \\
	&\text{For $i=K+2$; $j=2$; $r = K+2$}& \nonumber \\
	& \quad \quad \m M_{(K+2, 2, K+2)}^{\star} = 1
\end{align*}
These are all uniquely determined by the joint distribution of $(X_1,\dots,X_{2T+1})$ (joint distribution uniquely defines marginals).

\textit{\textbf{Step 3} (identifying parameters from three-way arrays):} Next assume we have an alternate set of parameters to those above; transition matrix $\widehat{\m A}$, arbitrary initial state distribution $\hat{\pi}$ (not necessarily stationary), and distribution function $\widehat{F}_{u,\x}$ defined analogously to above. These parameters define matrices $\m N_i (i=1,2,3)$, which are defined, and evaluated at the same points, as $\m M_i$ from above. Further, $\tilde{\m N}_1=\diag(\hat{\pi}\widehat{\m A}^T)\m N_1$, where $\hat{\pi}\widehat{\m A}^T$ is the marginal distribution of $U_{T+1}$. If the two sets of parameters induce the same joint distribution $(X_1, \dots, X_{2T+1})$ then Step 2 ensures that
$$ \langle \tilde{\m M}_1, \m M_2, \m M_3 \rangle = \langle \tilde{\m N}_1, \m N_2, \m N_3 \rangle $$
And, due to Theorem 4a \citet{kruskal77supp}, since $\tilde{\m M}_1$, $\m M_2$, $\m M_3$ satisfy \eqref{eq:kruskal}, there are $K\times K$ permutation matrix $\m P$ and  scaling matrices $\ve \Lambda_i, (i=1,2,3)$ with $\ve \Lambda_1 \ve \Lambda_2 \ve \Lambda_3=I_K$, such that
\begin{align}
	\m M_i = \ve \Lambda_i \m P \m N_i \,\, (i=2,3) \quad \text{ and } \quad \tilde{\m M}_1 = \ve \Lambda_1 \m P \tilde{\m N}_1.
\end{align}
Since $\m M_i, \m N_i \,\, (i=2,3)$ have only ones in the last column, $\ve \Lambda_2= \ve \Lambda_3 = I_K$ and thus also $\ve \Lambda_1=I_K$. The first consequence of this is that $H_T((\bar{\x}, \tilde{\ve v}); k)$, $F_u(\bar{\x})$, and $G_T((\bar{\x}, \ve v); k)$ are identified, up to simultaneous permutation of labels, for arbitrary $\ve v,\tilde{\ve v} \in \set S$ and given $\bar{\x}$. But notice that we can construct above argumentation for any $\bar{\x}$.

Further, as $\tilde{\m M}_1$ and $\m M_3$ are full rank, then so must be $\tilde{\m N}_1$ and $\m N_3$. This in turn means that $\m P$ is uniquely determined and $\pi = \hat{\pi}\widehat{\m A}^T$ as they are both in the last columns of $\tilde{\m M}_1=\tilde{\m N}_1$.

\textit{\textbf{Step 4} (identifying the transition matrix):} We show this for $T=K-1$. In \textit{Step 1}, we considered the matrix 
\begin{align}
	\m G_1 = (G_{T}((\x_0,\z_j); k))_{k,j=1,\dots,K}.
\end{align}
Now consider instead a one time-step longer sequence, with only the first observation different, keeping labeling fixed:
\begin{align}
	\m G = (G_{T+1}((\x, \x_0,\z_j); k))_{k,j=1,\dots,K}.
\end{align}
From \textit{Step 2}, $H_{T+1}(\cdot;k), F_k, G_{T+1}(\cdot;k)$ are identified up to joint label swapping and thus so is $\m G$. $\m G_1$ and $\m A$ are related by
$$\m G = \m A \m D_{\x}(\x_0)\m G_1,$$
where $\m D_{\x}(\x_0)=\diag(F_{1,\x}(\x_0),\dots,F_{K,\x}(\x_0))$,
and therefore
$$\m A = \m G \m G_1^{-1}\m D_{\x}(\x_0)^{-1}.$$
Thus $\m A $ can be identified from above (for a large enough $\x_0$ as to avoid issues in the inverse), as all the constituents are identified, so $\m A =\hat{\m A}$. Also, as $\m A$ is invertible and from above we can now get that $\pi = \hat{\pi}\m A^T$, which combined with $\pi\m A^{-1}=\pi$ gives $\pi=\hat{\pi}$.
\end{proof}

\begin{Lemma} \label{lm1} Let $t \le K-1$ and $\m B_1,\dots,\m B_t$ be full-rank matrices in $\R^{K\times K}$ such that $\m B_1 \mathbf{1}_K,\dots,\m B_t \mathbf{1}_K$ are linearly independent vectors. Let $\m A$ be a $K \times K$ full rank transition matrix, and $F_{1, \x_0}(\x),\dots, F_{K, \x_0}(\x)$ distribution functions satisfying Assumption~\ref{as:generic}. Then, for every $\x_0$, there exists some $\x^{\star} \in \set S$ and $j\in \{1,\dots, t\}$ for which the $K \times (t+1)$-matrix
	$$ \left[\m B_1 \m A \mathbf{1}_K,\dots,\m B_t \m A \mathbf{1}_K, \m B_{j} \m A \m D_{\x_0}(\x^{\star}) \mathbf{1}_K   \right] $$
	is full rank.
\end{Lemma}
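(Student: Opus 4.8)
The plan is to argue by contradiction, exploiting that a transition matrix fixes $\mathbf 1_K$. Suppose the conclusion fails for the given $\x_0$: for every $\x^\star\in\set S$ and every $j\in\{1,\dots,t\}$ the $K\times(t+1)$ matrix has rank at most $t$. Since $\m A$ is row-stochastic, $\m A\mathbf 1_K=\mathbf 1_K$, so its first $t$ columns are just $\m B_1\mathbf 1_K,\dots,\m B_t\mathbf 1_K$, which are linearly independent by hypothesis; hence rank-deficiency is equivalent to the last column lying in $V:=\vspan\{\m B_1\mathbf 1_K,\dots,\m B_t\mathbf 1_K\}$, a subspace of dimension $t\le K-1$. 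Writing $\ve d(\x^\star):=\m D_{\x_0}(\x^\star)\mathbf 1_K=(F_{1,\x_0}(\x^\star),\dots,F_{K,\x_0}(\x^\star))^{\T}$ for the vector of conditional CDF values, the failure hypothesis becomes
\[
	\m B_j\m A\,\ve d(\x^\star)\in V \qquad \text{for all } j\in\{1,\dots,t\},\ \x^\star\in\set S.
\]
As each $\m B_j\m A$ is invertible, this is the same as $\ve d(\x^\star)\in W:=\bigcap_{j=1}^t(\m B_j\m A)^{-1}V$ for every $\x^\star$, where $W$ is a linear subspace with $\dim W\le t\le K-1$; moreover $\mathbf 1_K\in W$, because $\m A^{-1}\mathbf 1_K=\mathbf 1_K$ and $\m B_j^{-1}(\m B_j\mathbf 1_K)=\mathbf 1_K\in\m B_j^{-1}V$.

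Thus the entire conditional-CDF curve $\{\ve d(\x^\star):\x^\star\in\set S\}$ would be confined to the proper subspace $W$, which also contains $\mathbf 1_K$. Pick any nonzero $\ve c\in W^{\perp}$ (possible since $\dim W<K$); from $\mathbf 1_K\in W$ we also get $\ve c^{\T}\mathbf 1_K=0$, i.e. $\sum_k c_k=0$. Confinement then gives $\sum_{k=1}^K c_k\,F_{k,\x_0}(\x^\star)=0$ for all $\x^\star\in\set S$, a nontrivial linear relation among the conditional emission CDFs. Splitting $\ve c$ into its positive and negative parts — disjointly supported and, because $\sum_k c_k=0$, of equal total mass — this relation says that two distinct (overlapping-index) convex combinations of the emission distributions $F_{1,\x_0},\dots,F_{K,\x_0}$ coincide, contradicting their (generic) distinctness in Assumption~\ref{as:generic}. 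Hence the supposition is untenable, and some pair $(\x^\star,j)$ makes the matrix full rank; in fact the bad $\x^\star$ form the zero-set $\{\x^\star:\ve d(\x^\star)\in W\}$, contained in the measure-zero set on which the relation above holds, so almost every $\x^\star$ works with the appropriate $j$.

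Operationally I would split the write-up into three short steps: (i) reduce to the statement that the new column lies in $V$, using $\m A\mathbf 1_K=\mathbf 1_K$ together with the linear independence of $\m B_1\mathbf 1_K,\dots,\m B_t\mathbf 1_K$; (ii) rephrase as confinement of $\ve d(\cdot)$ to a subspace $W\ni\mathbf 1_K$ of dimension $<K$, forcing a nontrivial affine relation among the $F_{k,\x_0}$; (iii) rule that relation out. Step~(iii) is the real obstacle: distinctness of CDFs does not by itself forbid mixture-type linear relations (for instance $F_2=\tfrac12(F_1+F_3)$), so this step must use the full strength of the genericity assumption — equivalently, linear independence of $F_{1,\x_0},\dots,F_{K,\x_0}$ for the given $\x_0$ — or, in the inductive use of this lemma inside Lemma~\ref{lm2}, the special product structure $\m B_i=\m A\m D\cdots\m A\m D$ inherited from the HMM, which keeps the $\m B_i$ away from degenerate configurations. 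Everything surrounding it is routine finite-dimensional linear algebra.
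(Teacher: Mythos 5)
Your proof follows essentially the same route as the paper's: both use $\m A\mathbf 1_K=\mathbf 1_K$ to see that the first $t$ columns stay linearly independent, and both then argue by contradiction that the curve $\x^\star\mapsto(F_{1,\x_0}(\x^\star),\dots,F_{K,\x_0}(\x^\star))^{\T}$ cannot be confined to a proper subspace; your subspace $W$ and orthogonal functional $\ve c$ are just a cleaner rendering of the paper's claim that the conic combination $\sum_{i}F_{i,\x_0}(\x^\star)\,\mathbf q_{j,i}$ sweeps out a ``conical hull in $K$ dimensions'' as $\x^\star$ varies. The weak point you flag in step~(iii) is real --- pairwise (generic) distinctness of the $F_{k,\x_0}$ does not by itself exclude a relation $\sum_k c_k F_{k,\x_0}\equiv 0$ with $\sum_k c_k=0$ --- but it is not a defect of your argument relative to the paper's: the paper's proof leans on exactly the same unproven strengthening when it asserts the conical hull is $K$-dimensional, which presupposes linear (or at least affine) independence of $F_{1,\x_0},\dots,F_{K,\x_0}$ as functions rather than Assumption~\ref{as:generic} as literally stated. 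So your attempt matches the paper's proof in substance and is, if anything, more explicit about the one step that genuinely needs the genericity assumption to be read as independence.
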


\begin{proof}
Since $\m A$ is a proper transition matrix, we have that
	$$\m M = \left[\m B_1\mathbf{1}_K,\dots,\m B_t \mathbf{1}_K \right] = \left[ \m B_1\m A \mathbf{1}_K,\dots,\m B_t \m A \mathbf{1}_K \right],$$
	and therefore $\m B_1\m A \mathbf{1}_K,\dots,\m B_t \m A \mathbf{1}_K$ are linearly independent, with $S_1=\vspan\{ \m B_1 \m A \mathbf{1}_K,\dots,\m B_t \m A \mathbf{1}_K \}$ and $\dim(S_1)=t$. The Lemma can now be proven by contradiction. Assume that for any $j$, $\m B_j\m A \m D_{\x_0}(\x^{\star}) \mathbf{1}_K$ is in the span $S_1$. We can write $\m Q_j=\m B_j\m A$, and notice that this is full-rank for all $j$. Hence
	$$\m B_j \m A \m D_{\x_0}(\x^{\star}) \mathbf{1}_K=\m Q_j \m D_{\x_0}(\x^{\star}) \mathbf{1}_K=\sum_{i=1}^K F_{i,\x_0}(\x^{\star}) \mathbf{q}_{j,i},$$
	where $\mathbf{q}_{j,i}$ denotes the $i$-th column vector of $\m Q_j$, and we thus have a conic (positive) combination of $K$ linearly independent vectors. If we consider all feasible $\x^{\star}$, this defines a subspace of conical hull in $K$ dimensions. This contradicts the assumption of $\m B_j\m A \m D_{\x_0}(\x^{\star}) \mathbf{1}_K$ being in the span $S_1$ for all $\x^{\star}$ and thus concludes the proof.
\end{proof}

\begin{Lemma} \label{lm2} Under Assumption~2 (of Lemma~\ref{lmhh:main_lemma}), for $T \ge K-1$ the conditional distributions of $W_T$ given $U_{T+1}=k (k=1,\dots,K)$ and $X_{T+1}=\x_0 \in \set S$, that is the functions $G_T((\x_0, \cdot); k)$, are linearly independent over $k=1,\dots,K$ for any fixed $\x_0$, and furthermore, there exist $\z_1,\dots,\z_K \in \set S^T$ such that the matrix
\begin{equation*}
	\m G_1 = (G_T(\z_j^+; k))_{k,j=1,\dots,K}=(G_T(\x_0,\z_j; k))_{k,j=1,\dots,K},
\end{equation*}
has full rank K.\end{Lemma}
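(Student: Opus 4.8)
The plan is to prove the linear-independence assertion first, since the ``furthermore'' part then follows mechanically: any $K$ linearly independent real functions on a set admit $K$ points at which the $K\times K$ matrix of evaluations is nonsingular (induction on $K$: expand the determinant along its last column; the coefficient of $f_K$ is the nonzero $(K-1)\times(K-1)$ minor, so by linear independence that combination is not the zero function and a next point exists). Applied to $\z\mapsto G_T(\x_0,\z;k)$, $k=1,\dots,K$, on $\set S^T$, this yields $\z_1,\dots,\z_K$ with $\m G_1=(G_T(\x_0,\z_j;k))_{k,j}$ of full rank $K$. It therefore suffices to show that for every $\x_0$ and every $T\ge K-1$, the identity $\sum_k c_k G_T(\x_0,\y_{1:T};k)=0$ for all $\y_{1:T}\in\set S^T$ forces $\ve c=0$.

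The engine is the one-step unfolding
\begin{equation}
	G_t(\y_0,\y_1,\dots,\y_t;k)=\sum_{k'=1}^{K}\alpha_{k,k'}\,F_{k',\y_0}(\y_1)\,G_{t-1}(\y_1,\dots,\y_t;k'),\qquad G_0\equiv1 ,
\end{equation}
which, iterated, writes the vector $(G_T(\y_{0:T};k))_k$ as $\m A\m D_{\y_0}(\y_1)\m A\m D_{\y_1}(\y_2)\cdots\m A\m D_{\y_{T-1}}(\y_T)\mathbf 1_K$, with $\m D_{\x}(\x')=\diag(F_{1,\x}(\x'),\dots,F_{K,\x}(\x'))$. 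Each $F_{k,\x}$ is a CDF tending to $1$, so $\m D_{\x}(\x')$ is invertible for large $\x'$, and $\m A$ is invertible; hence the prefix products $\m B=\m A\m D_{\x_0}(\y_1)\cdots\m A\m D_{\y_{T-2}}(\y_{T-1})$ are full-rank matrices when the arguments are large, and $\m B\mathbf 1_K=(G_{T-1}(\x_0,\y_{1:T-1};k))_k$. I would then run an induction that at each stage appends one more linearly independent $G$-vector, invoking Lemma~\ref{lm1}: given $r\le K-1$ full-rank matrices $\m B_1,\dots,\m B_r$ with $\m B_1\mathbf 1_K,\dots,\m B_r\mathbf 1_K$ linearly independent, Lemma~\ref{lm1} (whose proof uses generic distinctness of the emissions and the conic-hull structure of $\m D_{\x_0}(\x^\star)\mathbf 1_K=(F_{k,\x_0}(\x^\star))_k$) furnishes $\x^\star$ and an index $j$ with $\m B_1\mathbf 1_K,\dots,\m B_r\mathbf 1_K,\m B_j\m A\m D_{\x_0}(\x^\star)\mathbf 1_K$ linearly independent, i.e.\ a new free observation extending the span by one. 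Starting from a single nonzero $G$-vector (e.g.\ with large arguments the vector tends to $\mathbf 1_K$) and iterating $K-1$ times gives $K$ linearly independent vectors $(G_T(\x_0,\z_j;k))_k$ once $T\ge K-1$, which is the claim; the case that actually requires care is the smallest admissible $T=K-1$, where the span has no room to spare.

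The main obstacle is the autoregressive conditioning: unlike the classical hidden Markov case of \citet{Alexandrovich2016supp}, each emission block $\m D_{\y_{i-1}}(\y_i)$ is conditioned on the preceding observation $\y_{i-1}$, so the condition variable ``moves'' along the chain, and the prefixes $\m B$ and the appended block $\m A\m D_{\x_0}(\x^\star)$ must be threaded consistently through the induction (in particular one has to arrange matching conditions when concatenating blocks, and keep all arguments in the regime where the $\m D$'s stay invertible). Beyond that the ingredients are mild: full rank of $\m A$; the fact that two ``generically distinct'' emission CDFs cannot be proportional (their limits at $+\infty$ are both $1$), so $(F_{k,\x}(\cdot))_k$ spans at least a two-dimensional set; and the trivial remark that no coefficient vector equal to a standard basis vector $\ve e_i$ can lie in the null space, since that would force the conditional CDF $G_T(\x_0,\cdot;i)$ to vanish identically. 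These together drive the one-dimensional growth of the span at each of the $K-1$ steps, hence linear independence, from which the second assertion of the Lemma is immediate via the evaluation-matrix fact above.
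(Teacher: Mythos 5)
Your proposal is correct and takes essentially the same route as the paper's proof: the same factorization of $(G_T(\y_{0:T};k))_k$ into the product $\m A\m D_{\y_0}(\y_1)\m A\cdots\m A\m D_{\y_{T-1}}(\y_T)\mathbf 1_K$, the same induction that grows the span by one vector per time step by invoking Lemma~\ref{lm1}, and the same key ingredient $\m D_{\x}(\x')\to I$ as $\x'\to\infty$ for threading the new coordinate through the previously constructed vectors. The remaining differences are cosmetic: the paper first factors out the leading $\m A$ (working with $\tilde{\m G}_1$), starts the induction at $t=1$ with two independent vectors supplied by the generic distinctness of the emissions, and passes from $T=K-1$ to general $T\ge K-1$ via marginalization together with Lemma~17 of \citet{Allman2009supp}, which is exactly your ``evaluation-matrix fact.''
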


\begin{proof}
Recall:
\begin{equation}
	G_t((\x_0, \x_{1:t});k)=\sum_{k_1\cdots k_t}\alpha_{k,k_1}\prod_{s=2}^t\alpha_{k_{s-1},k_s}\prod_{s=1}^tF_{k_s, \x_{s-1}}(\x_s).
\end{equation}
Define $K \times K$ stochastic diagonal matrix $\m D_{\x_{t-1}}(\x_t)=\diag(F_{k_t=1,\x_{t-1}}(\x_t),\dots,F_{k_t=K,\x_{t-1}}(\x_t))$, and $\ve \alpha_l$ the $l$-th row vector of the transition matrix $\m A$. We can then write:
\begin{equation}
	G_t((\x_0, \x_{1:t});k)= \ve \alpha_k \m D_{\x_0}(\x_1)\m A \m D_{\x_1}(\x_2)\m A \cdots \m A \m D_{\x_{t-1}}(\x_t)\mathbf{1}_K.
\end{equation}
And define:
\begin{equation}
	\tilde{G}_t((\x_0, \x_{1:t});k)= F_{k, \x_0}(\x_1)\ve \alpha_k \m D_{\x_1}(\x_2)\m A \cdots\m A \m D_{\x_{t-1}}(\x_t)\mathbf{1}_K.
\end{equation}
It follows that $$\m G_1 = \m A(\tilde{G}_{K-1}((\x_0, \z_j); k)_{k,j=1,\dots,K}=\m A \tilde{\m G}_1,$$
and therefore it suffices to prove the lemma for $\tilde{\m G}_1$.

Proof by induction is used to show that there exist
\begin{align} \label{claima}
	\z_1^{(t)},\dots,\z_{t+1}^{(t)} \in \set S^t \,\,(t=1,\dots,K-1),
\end{align}
for which the vectors (i.e. columns of $\tilde{\m G}_1^{(t)}$)
\begin{align} \label{claimb}
	\ve v_j^{(t)} = \begin{bmatrix}\tilde{G}_t((\x_0, \z_j^{(t)}; 1) & \cdots & \tilde{G}_t((\x_0, \z_j^{(t)}; K)\end{bmatrix}' \, \,(j=1,\dots,t+1),
\end{align}
are linearly independent, and $\ve v_1^{(t)}$ has strictly positive entries. Note that the superscipts $(t)$ are used just to keep track of the $t$ being considered. The case $t=K-1$ will establish the lemma. In other words, we will only prove the theorem up to $T=K-1$. Since marginal distributions of linearly dependent distributions remain linearly dependent, linear independence follow for any $T \ge K-1$, and the existence of corresponding points $\z_1,\dots,\z_K \in \set S^T$ follows from Lemma 17 in \citet{Allman2009supp}. 

\textit{Proof by induction -- base case}: Set $t=1$. In this instance, $\tilde{\m G}_1^{(1)}$ is $K \times 2$, with columns given by:
$$\ve v_j^{(1)} = \begin{bmatrix} F_{1, \x_0}(\z_j^{(1)}),  & \cdots, & F_{K, \x_0}(\z_j^{(1)})\end{bmatrix}' \, \,(j=1,2).$$
By Assumption~2, the $K$ density functions with fixed $\x_0$ are distinct and therefore $\ve v_1^{(1)}$ and $\ve v_2^{(1)}$ are linearly independent with $\ve v_1^{(1)}$ strictly positive. 

\textit{Proof by induction -- induction step}: For induction, suppose that the claim \eqref{claima}-\eqref{claimb} holds for some $t < K-1$. Equation \eqref{claimb} can be rewritten as:
\begin{align}
	\ve v_j^{(t)} =  [ &F_{1, \x_{0,j}}(\x_{1,j})\ve \alpha_1 \m D_{\x_{1,j}}(\x_{2,j})\m A \cdots\m A \m D_{\x_{t-1, j}}(\x_{t, j})\mathbf{1}_K, \cdots, \nonumber \\
			&F_{K, \x_{0,j}}(\x_{1,j})\ve \alpha_K \m D_{\x_{1,j}}(\x_{2,j})\m A \cdots\m A \m D_{\x_{t-1, j}}(\x_{t, j})\mathbf{1}_K]' \nonumber \\
	=[& \underbrace{\m D_{\x_{0,j}}(\x_{1,j})\m A \m D_{\x_{1,j}}(\x_{2,j})\m A \cdots \m A \m D_{\x_{t-1, j}}(\x_{t, j})}_{\m B_j(\z_j^{(t)})}\mathbf{1}_K], \label{eq:v_mat_form}
\end{align}
and thus we have
\begin{align*}
	\tilde{\m G}_1^{(t)} =[\ve v_1^{(t)},\cdots,\ve v_{t+1}^{(t)}]= [\m B_1(\z_1^{(t)})\mathbf{1}_K, \cdots, \m B_{t+1}(\z_{t+1}^{(t)})\mathbf{1}_K].
\end{align*}
All $\m B_j (j=1,\dots,t+1)$ are full rank, and by the inductive assumption all the vectors are linearly independent. It follows from Lemma \ref{lm1} that there exists $j\in(1,\dots,t+1)$ and $\x^{\star}$ for which the $K\times(t+2)$ matrix: 
\begin{align}
	\m M =  \left[\m B_1(\z_1^{(t)}) \m A \mathbf{1}_K,\dots,\m B_{t+1}(\z_{t+1}^{(t)}) \m A \mathbf{1}_K, \m B_{j}(\z_j^{(t)}) \m A \m D_{\x_t}(\x^{\star}) \mathbf{1}_K   \right] 
\end{align}
has full rank $t+2$,and hence a $(t+2)\times(t+2)$ submatrix of non-zero determinant. Since $\m D_{\x_{t-1}}(\x_t) \rightarrow I$ when $\x_t \rightarrow \infty$,
\begin{align}
	\left[\m B_1(\z_1^{(t)}) \m A \m D_{\x_{t}}(\x) \mathbf{1}_K,\dots,\m B_{t+1}(\z_{t+1}^{(t)}) \m A \m D_{\x_{t}}(\x) \mathbf{1}_K, \m B_{j}(\z_j^{(t)}) \m A \m D_{\x_t}(\x^{\star}) \mathbf{1}_K   \right] \rightarrow \m M,\quad \x \rightarrow \infty 
\end{align}
and hence the corresponding submatrix will also have non-zero determinant in above, for an appropriate $\x \in \set S$.
Notice also how above defines $\ve v_j^{(t+1)}\,\, (j=1,\dots,t+2)$, as per equation \eqref{eq:v_mat_form}. Therefore the claim for $t+1$ is satisfied by setting 
\begin{align}
	\z_s^{(t+1)} = \left[\z_s^{(t)}, \x \right] \,\,(s=1,\dots,t+1) \quad \z_{t+2}^{(t+1)} = \left[\z_j^{(t)}, \x^{\star} \right] 
\end{align}
and so the proof concludes.

\end{proof}

\subsection{Implementation Detail for Simulation~1}
\label{sec:app_sim1}

We give here more detail on the data generation, training, and evaluation for IIA-GCL in Simulation~1 (Section~\ref{sec:sim1}).

\paragraph{Data Generation}
We generated data from an artificial NVAR process with time-index-parameterized nonstationary innovations.
The nonstationary innovations were randomly generated from a Gaussian distribution 
by modulating its mean and standard deviation across time $t$; 
i.e., the auxiliary variable $\ut = t$, and $\log p(s_i(t)) \propto - \lambda_{i1}(t) s_i(t)^2 - \lambda_{i1}(t)\lambda_{i2}(t) s_i(t)$, 
where $\lambda_{i1}(t)$ and $\lambda_{i2}(t)$ control the standard deviation and mean of the $i$-th component at time point $t$, respectively.
Each of $\lambda_{i1}(t)$ and $\lambda_{i2}(t)$ was modeled to be temporally smooth and continuous, by 1) obtaining a combination of
Fourier basis functions spanning the whole time series (sine and cosine bases with 64 frequencies),
which weights were randomly selected from uniform distribution, 2) normalizing to $[-2, 2]$, 
and 3) (only for $\lambda_{i1}(t)$) putting into exponential function.
The dimensions of the observations and innovations ($n$) were 20.
As the NVAR model, we used a multilayer perceptron we call NVAR-MLP,
which takes a concatenation of $\xtm$ and $\st$ as an input, then outputs $\xt$.
To guarantee the invertibility, we fixed the number of units of each layer to $n$,
and used leaky ReLU units for the nonlinearity except for the last layer which has no-nonlinearity.

\paragraph{Training}
Considering the innovation model with $\ut = t$, 
we here used IIA-GCL for the estimation of the latent innovations.
We adopted MLPs as the nonlinear scalar functions in Eq.~\ref{eq:gcl_r}.
The MLP for $\h$ ({\it h-MLP})
outputs $n$-dimensional feature values from an input $(\xt, \xtm)$,
which is supposed to represent the latent innovations after the training. 
The number of layers was selected to be the same as that of the NVAR-MLP,
and the number of node in each layer was $4n$ except for the output layer ($n$),
so as to make it have enough number of parameters as the demixing model.
A {\it maxout} unit was used as the activation function in the hidden layers,
which was constructed by taking the maximum across two affine fully connected weight groups,
while no-nonlinearity was applied at the last layer.
The scalar functions $\psi_{ij}$, $\mu_{ij}$, and $\alpha(\ut)$ were modeled to be consistent with the NVAR model; i.e.,
we incorporated the information into the model that 
1)~the innovations were generated based on the Gaussian distribution with mean and std modulations
by the log-pdf shown above, and 
2)~$\lambda_{i1}$ and $\lambda_{i2}$ were generated 
through a combination of Fourier basis functions with 64 frequencies, 
while their weights have to be estimated from the data.
For $\phi$, which has dependency on $\ut$, we used the same structure 
as the combination of $\h$, $\psi_{ij}$, and $\mu_{ij}$ explained above, which we call $\phi$-MLP,
except that the $\phi$-MLP takes a single data point ($\xtm$) as an input, instead of a set of the consecutive points $(\xt, \xtm)$.
The regression function also needs additional terms representing the marginal distributions of $\ve s$ and $\ve x$ ($\beta$ and $\gamma$),
which were here modeled by the weighted squared sum of the output units of the h-MLP and $\phi$-MLP, respectively.

The nonlinear regression function was trained by back-propagation with a momentum term
so as to discriminate the real dataset from its $\ut$-randomized version. 
The initial parameters were randomly drawn from a uniform distribution. 
The performance was evaluated by the Pearson correlation between the true innovations and the estimated feature values $\h$.
It was averaged over 10 runs, for each setting of the complexity (number of layers) $L \in [1,3,5]$ of the NVAR-MLP and the number of data points.

For comparison, we also applied NICA based on GCL (NICA-GCL; \citet{Hyvarinen2019}),
an NVAR with additive innovation model (AD-NVAR),
and variational autoencoder (VAE; \citet{Kingma2014}) to the same data. 
For all of them, we fixed the number of layers of the demixing model to be the same as that of the NVAR-MLP.
We fixed $L \in [1,2]$ exceptionally for VAE because of the instability of training in high layer models.
See Supplementary Material~\ref{sec:app_baseline} for the details of the baseline methods.

\subsection{Implementation Detail for Simulation~2}
\label{sec:app_sim2}

We give here more detail on the training for IIA-TCL in Simulation~2 (Section~\ref{sec:sim2}).

\paragraph{Training}
We applied IIA-TCL to the same data used in Simulation~1.
For IIA-TCL, we first divided the time series into 256 equally-sized segments, 
and used the segment label as the auxiliary variable $\ut$;
i.e., we assume that the data are segment-wise stationary,
Although this assumption is not consistent with the real innovation model (Simulation~1), it is approximately true because the modulations were temporally smooth and continuous; we thus consider here data with a realistic deviation from model assumptions.
We adopted MLPs as the nonlinear scalar functions in the regression function (Eq.~\ref{eq:tcl_r}).
The architecture of the MLP for $\h$ ({\it h-MLP}) was the same as that in Simulation~1.
Considering the log-pdf of the innovation,
we fixed $\psi_{i1}(y_i) = y_i^2$, and $\psi_{i2}(y_i) = y_i$.
For $\phi$, which has dependency on $\ut$, we used the same structure 
as the combination of $\h$, $\psi_{ij}$, and $w_{ij\t}$, 
except that $\phi$ takes a single data point ($\xtm$) as an input, instead of a set of consecutive points $(\xt, \xtm)$.
The training and evaluation methods follow those in Simulation~1.
We discarded the cases of small data sets ($2^{10}$ and $2^{12}$, corresponding to 4 and 16 samples in a segment)
because of the instability of training.

For comparison, we also applied NICA (TCL; \citet{Hyvarinen2016}).
See Supplementary Material~\ref{sec:app_baseline} for the details of the baseline methods.


\subsection{Simulation~2 in two-dimensional space}
\label{sec:app_sim2_2d}

We conducted an additional simulation to visually demonstrate the advantage of the IIA framework. 
The settings were the same to Simulation~\ref{sec:sim2} (see Supplementary Material~\ref{sec:app_sim2})
except that the dimensions of the observations and the innovations were two,
the number of layers was 5, and the number of data points was $2^{18}$.
The estimated innovations by IIA-TCL looks
clearly better demixed compared to the baseline methods (AD-NVAR and NICA-TCL; see Supplementary Material~\ref{sec:app_baseline} for the details).

\begin{figure*}[h]
 \centering
 \includegraphics[width=\columnwidth]{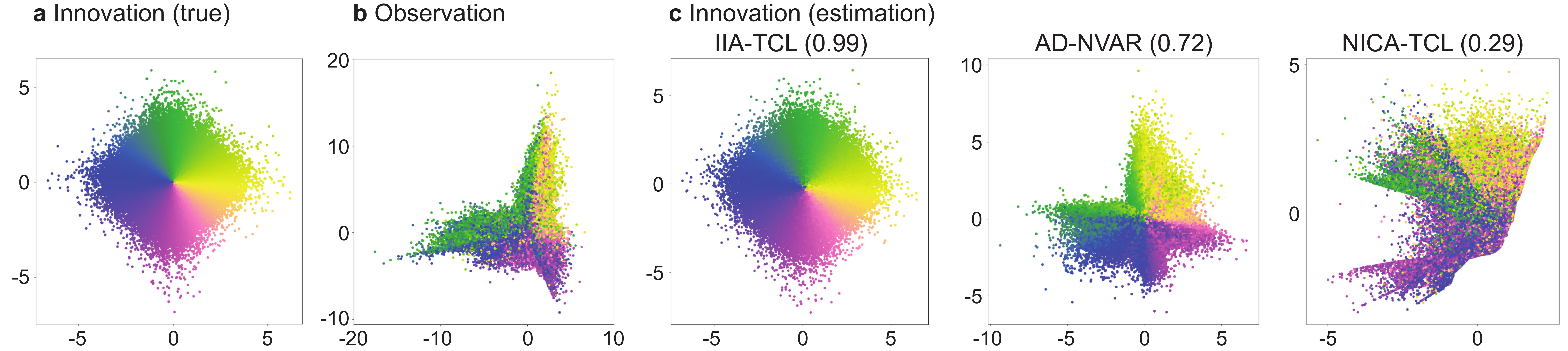}
 \caption{Estimation of the latent innovations from unknown artificial two dimensional NVAR process.
 (\textsf{\textbf{a}})~Scatter plot of the true innovations. 
 (\textsf{\textbf{b}})~Observations.
 (\textsf{\textbf{c}})~Innovations estimated by IIA-TCL, and for comparison, by AD-NVAR, and NICA-TCL. 
 The values show the mean absolute correlation coefficients between innovations and their estimates.
 }
 \label{fig:sim2d}
\end{figure*}

\subsection{Implementation Detail for Simulation~3}
\label{sec:app_sim3}

We give here more detail on the training for IIA-HMM in Simulation~3 (Section~\ref{sec:sim3}).

\paragraph{Data Generation}
We generated data from an artificial NVAR with hidden Markov chain.
The innovations were generated based on the method used in \citet{Halva2020}.
Briefly, the innovations were generated by Gaussian emission distributions of an HMM with $C$ discrete states, where
the means and the variances of the Gaussian distribution were selected to be distinctive across components/states.
The transition matrix was defined to have 99\% probability to stay at the current state,
and 1\% probability to switch to the next state, in cyclic manner.
We fixed the dimension of the innovations ($n$) to 5,
and the number of latent states was set to $C=2n + 1$. 
The observations were then obtained by randomly generated NVAR-MLP (see Supplementary Material~\ref{sec:app_sim1}), using the generated innovations.

\paragraph{Training}
We used here EM algorithm to maximize the likelihood for estimating the demixing model $\h$ based on MLP (h-MLP), 
the transition probability matrix $\m A$, the latent state at each data point, 
and the mean and the variance parameters of each state.
The implementation is based on that of NICA-HMM (\citet{Halva2020}; github.com/HHalva/hmnlica),
with some differences such as the demixing model and the incorporation of the margianl distribution $p(\ve x_0)$ (see Eq.~\ref{eq:hmm_likelihood}).
Although the likelihood includes the determinant of the Jacobian, which is widely considered difficult to compute,
we can numerically calculate its gradient by utilizing recent developments of the numerical calculation of gradients (here, JAX library).
The number of layers of h-MLP was selected to be the same as that of the NVAR-MLP,
and the number of node in each layer was $2n$ except for the output layer ($n$).
A smooth version of leaky ReLU was used as the activation function in the hidden layers; $
y = ax + (1-a) \log(1 + \exp^x)$, where $x$ is the input, $y$ is the output, and $a$ is the leak coefficient.
This type of differentiable function is useful for the stable estimation by the EM algorithm.
No-nonlinearity was applied at the last layer.
For better initialization of the h-MLP parameters than the random values, we firstly applied IIA-TCL to the observation
with assuming segment-wise stationarity (length of segments was 32), then used it as the initial values of the h-MLP.
Due to the sensitivity of the algorithm to the initial values of the parameters,
we repeated the estimation 20 times with different initializations, 
then selected the one with the highest likelihood.
The evaluation methods follow those in Simulation~1.
For comparison, we also applied NICA based on HMM (NICA-HMM; \citet{Halva2020}),
an NVAR with additive innovation model (AD-NVAR),
and IIA-TCL which was also used as the initialization.
For all of them, we fixed the number of layers of the demixing model to be the same as that of the NVAR-MLP.
See Supplementary Material~\ref{sec:app_baseline} for the details of the baseline methods.
%

\subsection{Detail for Experiments on Real Brain Imaging Data}
\label{sec:app_exp}

\paragraph{Data and Preprocessing}
We used a publicly available MEG dataset (\citet{Westner2018}; https://doi.org/10.17605/OSF.IO/M25N4). 
Briefly, the participants were presented with a random word selected from 420 unrelated German nouns 
(duration = $697 \pm 119$~ms) either visually (projected centrally on a screen) 
or auditorily (via nonferromagnetic tubes to both ears) randomly for each trial.
The stimulus was followed by a visual fixation cross until the end of the trial (2000~ms after the stimulus onset).
MEG signals were measured from twenty healthy volunteers by a 148-channel magnetometer 
(MAGNES 2500 WH, 4D Neuroimaging, San Diego, USA) inside a magnetically shielded room. 
The data were downsampled to 300~Hz, and epoched into trials.
The contaminated trials were rejected by visual inspections, 
and thereafter the blinks, eye movements, and cardiac artifacts were corrected using ICA 
(see \citet{Westner2018} for more details of the preprocessing).
We further band-pass filtered the data between 4~Hz and 125~Hz,
normalized them to have zero-mean and unit variance at the base line period ($-$1,000~ms to 0~ms) for each channel and trial,
and then cropped from $-$300~ms to 2,000~ms after the onset for each trial. 
The dimension of the data was reduced to 30 by PCA.
There were 219.1$\pm$22.4 trials (110.4$\pm$11.5 for auditory and 108.7$\pm$11.9 for visual) for each subject,
and in total, 2,207 auditory and 2,174 visual trials in the whole dataset.

\paragraph{IIA Settings}
We used IIA-TCL for the training, by assuming a third-order NVAR model (NVAR(3)) and the segment-wise-stationarity of the latent innovations.
The trial data were divided into 84 equally sized segments of length of 8 samples (26.7~ms), 
and the segment label was used as the auxiliary variable $\ut$. 
The same segment labels were given across the trials; however,
considering the possible stimulus-specific dynamics of the brain, we assigned different labels for the auditory and visual trials.
In total, there are 168 segments (classes) to be discriminated by MLR.
The network architectures of the MLPs are the same with those in Simulation~2,
except that $\h$ and $\phi$ take $\ve x_{t:t-3}$ and $\ve x_{t-1:t-3}$ as inputs, respectively,
the number of units of each layer was fixed to 30, and that of the last layer (number of components) was 5.
The smaller number of components than the data dimension can be justified by assuming 
the stationarity of the remaining components (the remaining innovations do not depend on $\ve u$; \citet{Hyvarinen2016}).
Considering the fast sampling rate of the data (300~Hz), we fixed the time lag between two consecutive samples to 3 (10~ms). 
The other settings were as in Simulation~2.
The training of a four-layer model by IIA-TCL took about 2~hours 
(Intel Xeon 3.5~GHz 16 core CPUs, 376~GB Memory, NVIDIA Tesla V100 GPU).

\paragraph{Evaluation Methods}
For evaluation, we performed classification of the stimulus modality (auditory or visual) by using the estimated innovations.
The classification was performed using a linear support vector machine (SVM) classifier trained on the stimulation label
and sliding-window-averaged innovations (width=16 and stride=8 samples) obtained for each trial. The performance was evaluated by
the generalizability of a classifier across subjects, i.e., one-subject-out cross-validation (OSO-CV); the feature extractor and the classifier were trained
only from the training subjects, and then applied to the held-out subject. 
The hyperparameters of the SVM were determined by nested OSO-CV without using the test data. 
For comparison, we also applied NICA based on TCL \citep{Hyvarinen2016} and AD-NVAR(3)
(See Supplementary Material~\ref{sec:app_baseline} for the details of the baseline methods,
with changing $\xtm$ to $\ve x_{t-1:t-3}$).
We additionally applied principal component analysis (PCA) to the estimations by AD-NVAR(3) before applying linear ICA
to reduce the dimension to 5 for fair comparisons.
We omitted $L=1$ for IIA-TCL because of the instability of training.

We also visualized the spatial characteristics of each innovation component
by estimating the optimal (maximal and minimal) input $\xt$ while fixing $\ve x_{t-1:t-3}$ to zero.
This method is commonly used in deep learning studies 
to visualize the input specificities of a hidden node of a neural network.
We used $l_2$ regularization on the input to avoid overfitting.

\subsection{Details of the baseline methods}
\label{sec:app_baseline}

\paragraph{NVAR with additive innovation model (AD-NVAR)}
AD-NVAR assumes NVAR with additive innovation model:
\begin{equation}
        \xt = \ve f_\text{ad}(\xtm) + \st,  \label{eq:fad}
\end{equation}
where $\f_\text{ad}:  \R^n \rightarrow \R^n$ is an unknown mixing model, and $\st \in \R^n$ is the latent innovations to be estimated.
For the estimation, we firstly estimate the mixing model from the observable time series,
which can be done practically by training a nonlinear predictor which takes $\xtm$ as an input and then outputs the estimation of $\xt$ so as to minimize the mean squared prediction errors.
The error term was then used as the estimation of the additive innovation $\st$.
Since the obtained components are not guaranteed to be mutually independent,  
we additionally applied linear ICA based on nonstationarity of variance (NSVICA; \citet{Hyvarinen2001})
to the estimated additive innovations for fair comparisons.
For the mixing model $\ve f_\text{ad}$, we used an MLP with the similar architecture as IIA, 
except for the difference of the dimension of the input.

\paragraph{Variational auto encoder (VAE)}
We used VAE \citep{Kingma2014} as a baseline of unsupervised representation learning frameworks.
VAE assumes that the latent variables have spherical Gaussian distribution,
then embed data into the latent space in an unsupervised manner
by training an encoder, which embeds the data into the latent space,
and a decoder, which reconstructs the input from the latent variables, so as to minimize the reconstruction error.
In the simulations, we trained an encoder based on an MLP, which nonlinearly embeds an input $(\xt, \xtm) \in \R^{2n}$ into an $n$-dimensional feature space
representing the estimation of the innovation.
The number of nodes in each layer was designed to linearly decrease from input ($2n$) to the output ($n$).
We additionally applied linear ICA based on nonstationarity of variance (NSVICA; \citet{Hyvarinen2001})
to the estimated innovations for fair comparisons because VAE does not assume the independence on the estimations.

\paragraph{Nonlinear Independent component analysis (NICA)}
NICA assumes instantaneous nonlinear mixture model:
\begin{equation}
	\xt = \f_\text{ICA}(\st),
\end{equation}	
where $\f_\text{ICA}: \R^{n} \rightarrow \R^{n}$ is the instantaneous (nonlinear) mixture function,
and $\st$ is the latent components.
Since general NICA problem has indeterminacy \citep{Hyvarinen1999}, we need some assumptions on the latent components
to guarantee the identifiability, similarly to the IIA frameworks. We here used NICA-GCL \citep{Hyvarinen2019}, NICA-TCL \citep{Hyvarinen2016}, and NICA-HMM \citep{Halva2020}
for comparison, which the basic proofs of IIA are based on.
In the simulations, we estimated the independent components by the similar architecture as IIA (e.g., latent components assumptions, MLPs, and so on),
except that it assumed the instantaneous mixture model for the observation.


\end{document}